\documentclass[journal]{IEEEtran}
\usepackage{amsmath,amsfonts}
\usepackage{algorithmic}
\usepackage{algorithm}
\usepackage{array}
\usepackage[caption=false,font=normalsize,labelfont=sf,textfont=sf]{subfig}
\usepackage{textcomp}
\usepackage{stfloats}
\usepackage{url}
\usepackage{verbatim}
\usepackage{graphicx}
\usepackage{cite}
\hyphenation{op-tical net-works semi-conduc-tor IEEE-Xplore}

\usepackage[version=4]{mhchem} 

\usepackage{array}
\usepackage{amssymb}
\usepackage{color}

\usepackage{academicons}
\usepackage{xcolor}
\usepackage{svg}

\usepackage[hidelinks]{hyperref}

\usepackage{amsthm}
\usepackage{mathtools}
\newtheorem{theorem}{Theorem}[section]
\newtheorem{corollary}{Corollary}[theorem]
\newtheorem{lemma}[theorem]{Lemma}

\usepackage[table]{xcolor}
\usepackage{multirow}
\usepackage[flushleft]{threeparttable}

\usepackage[printonlyused]{acronym}
\acrodef{3D}{three-dimensional}
\acrodef{3GPP}{3rd generation partnership project}
\acrodef{4G}{fourth generation}
\acrodef{5G}{fifth generation}

\acrodef{B5G}{5G and beyond}
\acrodef{6G}{sixth generation}

\acrodef{DPD}{digital predistortion}
\acrodef{FC}{Fast convergence}
\acrodef{IB}{in-band}
\acrodef{ICE}{iterative compensation of error}
\acrodef{MER}{modulation error ratio}
\acrodef{NL}{nonlinear}
\acrodef{OFDM}{orthogonal frequency division multiplexing}
\acrodef{OOB}{out-of-band}
\acrodef{PA} {power amplifier}
\acrodef{PSD}{power spectral density}
\acrodef{PW-DPD}{piece-wise DPD}

\acrodef{RF} {radio frequency}
\acrodef{DVB-T2} {second generation Digital Video Broadcast-Territorial}
\acrodef{ATSC 3.0} {American next generation DTT}
\acrodef{ICE} {iterative compensation of error}
\acrodef{ILA} {indirect learning architecture}
\acrodef{FCJO} {FC joint optimization}
\acrodef{PAPR} {peak-to-average power ratio}
\acrodef{PW} {piecewise}
\acrodef{IFFT} {inverse fast Fourier transform}
\acrodef{FFT} {fast Fourier transform}
\acrodef{MP} {memory polynomial}
\acrodef{GMP} {generalized memory polynomial}
\acrodef{LUT} {look-up table}

\acrodef{ML}{machine learning}
\acrodef{OBO}{output backoff}
\acrodef{ARB}{arbitrary waveform generator}

\acrodef{NR}{new radio}
\acrodef{QAM}{quadrature amplitude modulation}
\acrodef{FLOPs}{floating point operations}
\acrodef{DSP}{digital signal processing}
\acrodef{IBO}{input back-off}
\acrodef{OBO}{output back-off}
\acrodef{VSA}{Vector Signal Analyzer}

\acrodef{NMSE}{normalized mean-squared error}

\acrodef{ILA}{indirect learning architecture}

\acrodef{mmW} {millimeter wave}
\acrodef{kNN} {k-nearest neighbors}

\acrodef{LS} {least-squares}
\acrodef{rms} {root mean square}
\acrodef{GLA} {generalized Lloyd's algorithm}
\acrodef{ITU} {International Telecommunication Union}
\acrodef{VS-GMP} {vector-switched generalized memory polynomial}
\acrodef{SVS-GMP} {statistical VS-GMP}
\acrodef{I} {in-phase}
\acrodef{Q} {quadrature}

\acrodef{ACLR}{adjacent channel leakage ratio}
\acrodef{ACPR}{adjacent channel power ratio}
\acrodef{AM/AM}{amplitude-to-amplitude}
\acrodef{AM/PM}{amplitude-to-phase}
\acrodef{AI}{artificial intelligence}
\acrodef{ANN}{artificial neural network}
\acrodef{LM}{levenberg–marquardt}

\acrodef{ReLU}{rectified linear unit}
\acrodef{Tanh}{tangent sigmoid}
\acrodef{ELU}{exponential linear unit}
\acrodef{GELU}{gaussian error linear unit}

\acrodef{CDF}{cumulative distribution function}
\acrodef{CCDF}{complementary cumulative distribution function}
\acrodef{CCRR}{computational complexity reduction ratio}

\acrodef{DT} {decision tree}
\acrodef{DT-GMP} {decision tree-based generalized memory polynomial}
\acrodef{DL}{deep learning}
\acrodef{DUT} {device under test}
\acrodef{DLA}{direct learning architecture}
\acrodef{DTT}{digital terrestrial television}
\acrodef{DDR}{dynamic deviation reduction}
\acrodef{EMP}{envelope memory polynomial}

\acrodef{EVM} {error vector magnitude}

\acrodef{FR1}{frequency range 1}
\acrodef{FR2}{frequency range 2}
\acrodef{FLOP}{floating point operation}

\acrodef{LTE}{long term evolution}
\acrodef{LTI}{linear time-invariant}

\acrodef{MLP} {multilayer perceptron}
\acrodef{MSE} {mean-squared error}

\acrodef{NN} {neural network}
\acrodef{RNN} {recurrent NN}
\acrodef{CNN} {convolutional NN}
\acrodef{STD}{standard deviation}

\acrodef{RVTDNN} {real-valued time-delay NN}
\acrodef{R2TDNN} {residual real-valued time-delay NN}
\acrodef{ARVTDNN} {augmented real-valued time-delay NN}
\acrodef{SRTDNN} {simplified real-valued time-delay NN}
\acrodef{RVTDCNN} {real-valued time-delay convolutional NN}
\acrodef{LSTM} {long short-term memory}
\acrodef{TDNN} {time-delay neural network}
\acrodef{$I/Q$} {in-phase and quadrature}
\acrodef{ResNet} {residual neural network}

\acrodef{MMSE} {minimum mean-squared error}
\acrodef{LTI} {linear time-invariant}
\acrodef{BIBO} {bounded input and bounded output}

\acrodef{LHS} {left-hand side}
\acrodef{RHS} {right-hand side}

\acrodef{MI} {memory index}
\acrodef{CRLB} {Cramér–Rao lower bound}

\acrodef{NARMA} {nonlinear auto-regressive moving average}

\acrodef{GOD} {gap-induced orthogonality delimitation}
\acrodef{i.i.d.} {independent and identically distributed}
\acrodef{PIML} {physics-informed machine learning}
\acrodef{DOB}{deviation-induced orthogonality bound}

\usepackage{siunitx}

\usepackage{pifont}
\newcommand{\cmark}{\ding{51}}%

\newtheorem{definition}{Definition}
\newtheorem*{remark}{Remark}

\newcommand*{\Scale}[2][4]{\scalebox{#1}{$#2$}}%

\usepackage{enumitem} 

\usepackage{booktabs} 

\begin{document}

\title{Information-Theoretic Bounds and Task-Centric Learning Complexity for Real-World Dynamic Nonlinear Systems}

\author{S. S. Krishna Chaitanya Bulusu,~\IEEEmembership{Senior~Member,~IEEE,} and Mikko. J. Sillanpää,~\IEEEmembership{Member,~IEEE}



\thanks{S. S. Krishna Chaitanya Bulusu is with the Centre for Wireless Communications (CWC), University of Oulu, 90570 Oulu, Finland.~(\textit{Corresponding author: S. S. Krishna Chaitanya Bulusu~(email:sri.bulusu@oulu.fi)}) 

Mikko. J. Sillanpää is with the Department of Mathematical Sciences (DMS), University of Oulu, 90570 Oulu, Finland.~(\textit{(email:mikko.sillanpaa@oulu.fi)})
}
}

\markboth{THIS WORK SHALL BE SUBMITTED TO IEEE TRANSACTIONS FOR CONSIDERATION}%
{}

\maketitle

\begin{abstract}
Dynamic nonlinear systems exhibit distortions arising from coupled static and dynamic effects. Their intertwined nature poses major challenges for data-driven modeling. This paper presents a theoretical framework grounded in structured decomposition, variance analysis, and task-centric complexity bounds.

The framework employs a directional lower bound on interactions between measurable system components, extending orthogonality in inner product spaces to structurally asymmetric settings. This bound supports variance inequalities for decomposed systems. Key behavioral indicators are introduced along with a memory finiteness index. A rigorous power-based condition establishes a measurable link between finite memory in realizable systems and the First Law of Thermodynamics. This offers a more foundational perspective than classical bounds based on the Second Law.

Building on this foundation, we formulate a `Behavioral Uncertainty Principle,' demonstrating that static and dynamic distortions cannot be minimized simultaneously. We identify that real-world systems seem to resist complete deterministic decomposition due to entangled static and dynamic effects. We also present two general-purpose theorems linking function variance to mean-squared Lipschitz continuity and learning complexity. This yields a model-agnostic, task-aware complexity metric, showing that lower-variance components are inherently easier to learn.

These insights explain the empirical benefits of structured residual learning, including improved generalization, reduced parameter count, and lower training cost, as previously observed in power amplifier linearization experiments. The framework is broadly applicable and offers a scalable, theoretically grounded approach to modeling complex dynamic nonlinear systems.
\end{abstract}

\begin{IEEEkeywords}
Behavioral modeling, Complexity, Dynamic nonlinear systems, Inner product bounds, Learning complexity, Machine learning, Power amplifiers, Residual learning, Structured decomposition, System identification, Variance bounds.
\end{IEEEkeywords}

\section{Introduction}
\IEEEPARstart{D}{ynamic} nonlinear systems are widely encountered in signal processing, communications, and control applications \cite{strogatz2018nonlinear,vidyasagar2002nonlinear}. Modeling real-world dynamic nonlinear systems is challenging due to time-varying interactions among memory, nonlinearities, and external influences. A memoryless nonlinear system can be described by Taylor expansion or static regression, while a linear memory system can be approximated as time-variant and treated as an \ac{LTI} system. Dynamic nonlinear systems combine both effects, producing time-varying distortions that cannot be captured by superposition or impulse responses, and their behavior strongly depends on structure and operating conditions \cite{Schetzen_1980_NLsystems}. Further complexity arises from practical factors such as thermal drift, aging, bias instability, and environmental variations (e.g., temperature and mechanical stress), which continuously alter system response beyond static or deterministic models \cite{Boumaiza_2005_Aging}.

There is no universally accepted model for dynamic nonlinear systems, as different approaches balance accuracy, interpretability, efficiency, and scope. Common techniques include Volterra-based models \cite{Schetzen_1980_NLsystems}, block-structured models \cite{Giri2010}, neural network models \cite{NP90}, \ac{NARMA}-based methods \cite{ucak2021online}, and Kalman filter approaches \cite{Thrun2002}. Different models are preferred based on the specific system characteristics and constraints. In wireless communications, Volterra and neural network models are widely used to capture amplifier nonlinearities \cite{4118399}. In robotics and adaptive control, Kalman filters and NARMA models support real-time identification \cite{urrea2021kalman,sen2023narma}. In biomedical signal processing, block-structured models are preferred for their physiological interpretability \cite{hunter1986identification,marmarelis2004nonlinear}. This diversity underscores the challenge of accurately representing dynamic nonlinear systems across operating conditions using a single model.

Conventional block-structured models such as Wiener, Hammerstein, and Wiener–Hammerstein approximate dynamic nonlinear systems by combining LTI components with static nonlinearities \cite{Schetzen_1980_NLsystems}. These models assume strict separation between nonlinear effects and memory, an assumption often violated in practice due to environmental and operational variations \cite{pintelon2012system}. To address such limitations, \ac{AI}-based frameworks have emerged as adaptive, data-driven alternatives. Unlike conventional models with fixed structures, machine learning methods learn directly from data, enabling flexibility where memory and nonlinearity are tightly coupled. However, their high computational cost remains a bottleneck, particularly in hardware-constrained scenarios \cite{MISRA2010239}.

Our earlier work introduced an \ac{AI}-based block-structured framework for modeling dynamic nonlinear systems, demonstrated on power amplifiers in communication systems \cite{Bulusu_2025}. A \ac{PA} exemplifies a dynamic nonlinear system, with nonlinearities arising from static effects such as AM–AM and AM–PM conversion, and dynamic effects including thermal variation, bias drift, and memory \cite{wood2014behavioral}. The framework in \cite{Bulusu_2025} employs two blocks: a static nonlinear block and a neural network block. By isolating the dynamic residual and modeling only this component with a neural network, the method adapts to time-varying conditions while reducing training and inference complexity. This selective learning strategy effectively compensates for the nonlinear memory without modeling the entire distortion space. Experimental validation on a real PA showed superior performance over conventional approaches such as those in \cite{Morgan_2006_DPD_Review}.

Unlike the classical progression where theoretical results precede experimental validation, this work originated from an empirical observation in \cite{Bulusu_2025} that confirmed the abstract idea of separately learning static and dynamic effects. Our apparent empirical success raised a deeper theoretical question:
\begin{center}
\emph{Can we rigorously prove why structured residual learning works effectively in real-world dynamic nonlinear systems?}
\end{center}
This question served as the primary motivation for the present work. In seeking its resolution, we uncovered several valuable theoretical insights, which we now formalize and present in this paper.

The key contributions of this work are summarized as follows:
\begin{itemize}
\item \textbf{Deviation-induced Orthogonality Bound (DOB):}
We establish the DOB theorem in \textbf{Appendix~\ref{Appendix:DOB_Theorem}}, which provides directional lower bounds on deviations in inner product spaces under power dominance. This result generalizes classical orthogonality and forms the theoretical basis for variance and memory analysis.

\item \textbf{Diagnostic Indicators for Static–Dynamic Separation:}  
We introduce two behavioral indicators, together with an additive decomposition that conceptually separates static and dynamic effects. The `Behavioral Uncertainty Principle' (see \textbf{Section~\ref{sec:Quantifying Behavioral Uncertainty}}) shows that such a perfect separation is unattainable without residual error. Nevertheless, this abstraction enables systematic analysis of memory–nonlinearity interactions. This principle itself was an unexpected result of our framework that emerged naturally due to the distortion identity (refer \textbf{Theorem~\ref{thm:distortion_identity}}). While not pursued in detail here, this principle opens an important avenue for future research.

\item \textbf{Variance Bounds and Memory Finiteness:}
In \textbf{Section~\ref{sec:Memory_Variance}}, we introduce a measurable memory finiteness index. Using the DOB framework, we further demonstrate a fundamental link between bounded memory in realizable systems and the First Law of Thermodynamics. Then, we derive variance inequalities showing that dynamic residuals are smoother than total distortion in \textbf{Thoerem~\ref{Thm:Variance_domination_theorem}}.

\item \textbf{Weak Uncorrelatedness Lemma:}  
Although perfect orthogonality between static and dynamic residuals is not guaranteed, we establish a weak uncorrelatedness lemma in \textbf{Appendix~\ref{appendix:TheAppendixB_WeakUncorrelatednessLemma}}, supported by empirical evidence from real-world dynamic nonlinear systems, which indicates that their interaction is weakly coupled and bounded (refer \textbf{Table~\ref{realworld_table}}). While the lower bound is guaranteed by the GOD theorem, identification of a meaningful upper bound seems to be system-specific.

\item \textbf{Variance–Complexity Link and Task-Centric Learning Metric:}  
We show that the target function's variance imposes a universal lower bound on learning complexity. By linking this to mean-squared Lipschitz continuity in \textbf{Theorem~\ref{Thm:Lipschitz_dominance}}, we introduce a model-agnostic, task-aware complexity metric that explains the efficiency gains of structured residual learning in \textbf{Theorem~\ref{Thm:Learning_Complexity_Reduction_via_Structural_Decomposition}}.
\end{itemize}

This paper is organized as follows.
\textbf{Section~\ref{sec:section_2}} introduces the structured decomposition framework for dynamic nonlinear systems.
\textbf{Section~\ref{sec:section_3}} develops core theorems, behavioral indicators, and statistical identities forming a consistent foundation for real-world nonlinear system identification.
\textbf{Section~\ref{sec:The DOB Theorem and Its Fundamental Bounds on Behavior}} applies the {Deviation-induced Orthogonality Bound Theorem} to our framework, introduces the concept of behavioral uncertainty, formulates a measurable index for memory finiteness, and establishes the thermodynamic basis for the non-negativity of combined indicators, linking power boundedness to memory finiteness. It also formulates the variance domination theorem.
\textbf{Section~\ref{sec:section_5}} proposes a task-centric, model-agnostic framework linking target function variance to learning complexity via two general-purpose theorems. These results culminate in a variance-based complexity metric and explain the observed benefits of structured residual learning.
\textbf{Section~\ref{sec:section6}} provides empirical validation through PA linearization experiments, showing strong agreement between theoretical predictions and observed gains in performance and computational efficiency.
The paper concludes with key findings and directions for future work.
\begin{figure}[t!]
\centering
\includegraphics[width=8.5cm]{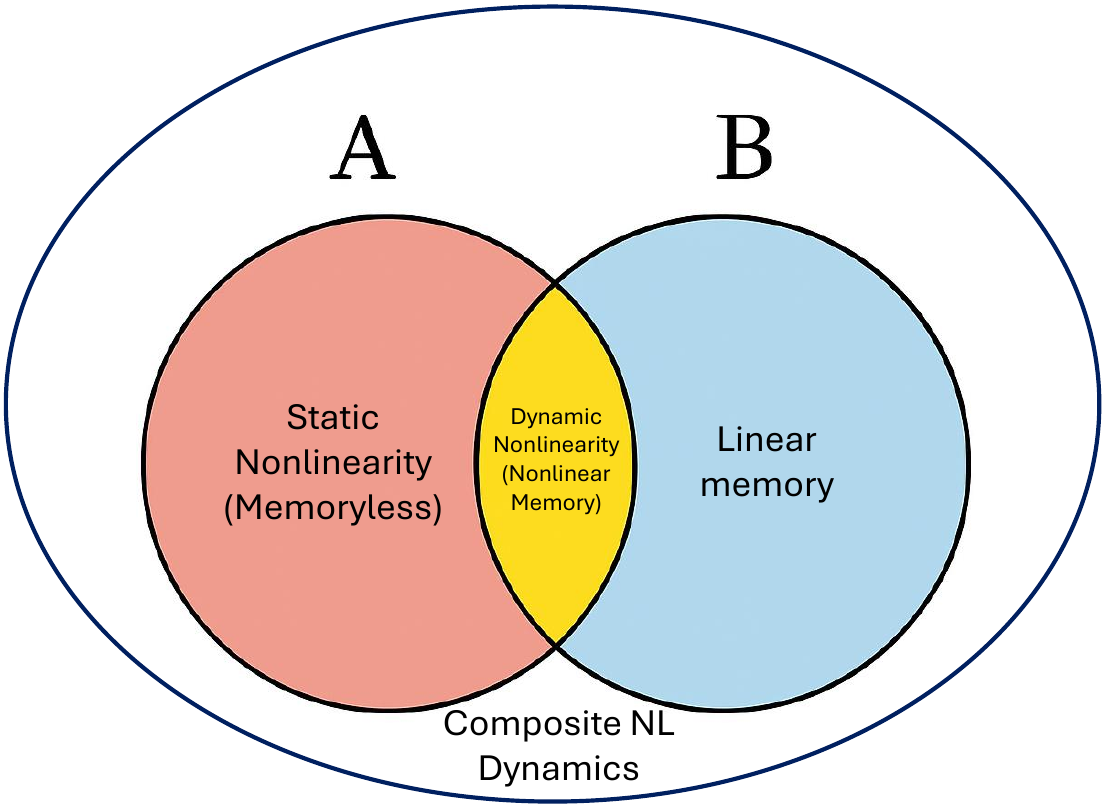}
\caption{Structured decomposition of nonlinear (NL) system behavior. $\mathbf{A}$: total nonlinearity; $\mathbf{B}$: total memory.}
\label{fig:Structured_Decomposition_Venn_Diagram}
\end{figure}
\section{Structured Decomposition Framework}\label{sec:section_2}
\subsection{Motivation and Decomposition Model}
Even when we set aside the inherent non-stationarity of dynamic nonlinear systems, disentangling memory effects from static nonlinearities remains a major challenge \cite{Billings2013}. These two distortion sources behave differently: static nonlinearities are time-invariant and easier to characterize, whereas dynamic effects arise from memory, hysteresis, or frequency-selective behavior and are more difficult to model and compensate.

To address this, we reformulate the standard distortion model $ Y = X + d $ into a structured decomposition:
\begin{equation}\label{StaticNLDistortion_ResidualDynamicDistortion_Segregation_Formulation}
Y = G + h,
\end{equation}
where $ X $ is the input, $ Y $ is the output, $ G $ represents the static nonlinear transformation and $ h $ is the residual dynamic distortion.
\subsection{A Chemist’s Perspective on Decomposition and Learning}
The overall nonlinear behavior of real systems can be viewed as a ``composite compound'' formed by interwoven static and dynamic mechanisms. Classical block models, such as Wiener and Hammerstein, attempt an orthogonal separation \cite{Giri2010}, but real-world systems rarely permit such partitions due to the inherent coupling of memory and nonlinearity.

Instead, we propose a decomposition inspired by a chemist’s practice, where a complex mixture is separated into functionally distinct components that are easier to analyze. Just as the ideal decomposition of water (\ce{H2O}) would yield hydrogen (\ce{H2}) and oxygen (\ce{O2}), yet in practice dissociation into hydrogen (\ce{H+}) ion and hydroxide ion (\ce{OH-}) is more feasible, our framework seeks a tractable, approximate separation rather than an idealized orthogonality. As illustrated in Fig.~\ref{fig:Structured_Decomposition_Venn_Diagram}, the structured decomposition yields:
\begin{enumerate}
    \item \textbf{Static nonlinearity} $ G $: Memoryless transformation capturing instantaneous distortions.
    \item \textbf{Residual dynamic distortion} $ h $: Captures all remaining effects, including both linear and nonlinear memory behavior.
\end{enumerate}
\begin{align}
\textbf{Composite nonlinear dynamics} = \underbrace{\textbf{A}}_{\text{Memoryless}} \cup \underbrace{\textbf{B}}_{\text{Memory}},
\end{align}
where $A$ is the total static nonlinear behavior and $B$ is the total memory behavior. These two are the decomposable elements of `Composite nonlinear dynamics'.

Rather than aiming for an idealized orthogonal split, this approach is computationally feasible, physically grounded, and interpretable. It supports practical goals such as dynamic modeling, real-time observation, and compensation design. Importantly, it also provides a foundation for learning: instead of training a learner to model the entire distortion $d$, we isolate the complex residual $h$, while the static part $G$ is handled analytically. This selective strategy reduces learning complexity and improves generalization. In the remainder of the paper, this decomposition underpins our variance-based analysis, theoretical bounds, and interpretability arguments.
\subsection{Axiom: Decomposability and Representability}\label{sec:axiom_decomposability}
The framework assumes that system input $X$ and output $Y$ can be represented as measurable signals, enabling the use of statistical learning tools. While not all natural systems (e.g., biological metabolism) admit such abstraction, for those that do, the proposed decomposition offers a viable and effective modeling paradigm.
\begin{table}[t!]
\centering
\caption{Notation Summary}
\label{Notations Table}
\begin{tabular}{ll}
\toprule
Symbol & Description \\
\midrule
$ X $ & Input signal (real-valued random process) \\
$ Y $ & Output signal \\
$ G $ & Static nonlinear function of $ X $ \\
$ r = G - X $ & Static residual \\
$ h $ & Dynamic residual \\
$ d = Y - X $ & Total distortion \\
$ \mathbb{E}[\,\cdot\,] $ & Expectation (mean) operator\\
$ \operatorname{Var}(\cdot) $ & Variance operator\\
$ \operatorname{Cov}(\cdot,\cdot) $ & Covariance operator\\
$ \theta $ & Static behavior indicator (measure)\\
$ \alpha $ & Dynamic behavior indicator (measure)\\
$ \langle \cdot, \cdot \rangle $ & Inner product in Hilbert space \\
$ \|\cdot\| $ & Euclidean (2-)norm \\
$ |\cdot| $ & Modulus (absolute value) operator\\
$ \mathcal{O}(n) $ & Order of complexity \cite{cormen2009algorithms}\\
$ \Re\{\cdot\} $ & Real part of a complex number\\
$ L_f $ & Mean-squared Lipschitz constant of function $ f(x) $\\
$\sup f(x)$& Supremum of function $ f(x) $\\
dB & Decibel (logarithmic unit) \\
dBc & Decibels relative to carrier \\
\bottomrule
\end{tabular}
\end{table}

\section{Physical and Statistical Foundations for Modeling Real-World Dynamic Nonlinear Systems}
\label{sec:section_3}

This section develops results that formalize the physical and statistical constraints governing real-world dynamic nonlinear systems, spanning domains from engineering and control to finance, neuroscience, and astrophysics \cite{strogatz2018nonlinear,ljung1999system,farmer1987predicting,may1976simple}. While variance-based analysis is traditionally applied to \ac{PA}s, stock markets, and control systems, the underlying principles extend naturally to systems governed by nonlinear energy transport and bounded resource flow.

The theorems address energy conservation, bounded power flow, and static–dynamic indicators. We first establish that nonlinear systems, despite their complexity, comply with fundamental conservation and boundedness principles. We then introduce power-based diagnostic identities to separate static and dynamic effects, allowing modelers to assess the relative influence of memory and nonlinearity.

Together, these results provide a foundation for modeling strategies that respect physical laws, exploit system structure, and reduce complexity. The notations used throughout this paper are mentioned in \textbf{Table~\ref{Notations Table}}.
\subsection{Power Boundedness and Properties of System Components}
\label{subsection:EnergyConservation_PowerBoundedness}
Although many frameworks exist for analyzing distortion and stability in nonlinear systems, no unified \textit{power-centric} law has been established. The following theorems provide such a foundation for optimizing power efficiency in nonlinear signal processing. While the initial results appear straightforward, they lead to deeper conclusions that anchor and complete the proposed framework.

\begin{lemma}[\Ac{BIBO} Lemma]
\label{lem:bounded_input_output_powers_realworldNLsystem}
Let a real-world (causal and physically realizable) dynamic nonlinear system be excited by an input signal $X$ producing an output signal $Y$. Then, the mean powers of both the input and output signals are finite, i.e.,
\begin{enumerate}
    \item $\mathbb{E}[X^2] < \infty$,  \hfill \text{(Bounded input power)}
    \item $\mathbb{E}[Y^2] < \infty$. \hfill \text{(Bounded output power)}
\end{enumerate}   
\end{lemma}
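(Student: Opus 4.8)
The plan is to treat this as a consequence of \emph{physical realizability} rather than abstract operator theory, since the hypotheses (causality, physical realizability) are exactly the constraints that force finite second moments. First I would establish the input bound. Because $X$ is produced by real hardware of finite dynamic range --- a finite supply voltage, a finite-resolution converter, a bounded peak amplitude --- there exists a constant $M_X < \infty$ with $|X| \le M_X$ almost surely. Together with the measurability of $X$ guaranteed by the axiom of decomposability and representability in Section~\ref{sec:axiom_decomposability}, this gives $\mathbb{E}[X^2] \le M_X^2 < \infty$ at once, which is claim~(1).

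For the output I would invoke energy conservation, i.e.\ the First Law of Thermodynamics. A causal, physically realizable system is driven only by the finite power present at its input together with a finite amount of externally supplied (e.g.\ DC bias) power, and it cannot manufacture energy. Writing $P_{\mathrm{tot}} < \infty$ for the total available mean power, conservation forces $\mathbb{E}[Y^2] \le P_{\mathrm{tot}} < \infty$, which is claim~(2). Equivalently, one may read the system operator as \ac{BIBO}-stable in the mean-power sense, mapping a bounded-power input to a bounded-power output.

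The main obstacle is bridging the deterministic physical statement ``amplitude/energy is bounded'' with the probabilistic statement ``the second moment is finite.'' This is immediate for the input, where an almost-sure amplitude bound trivially controls $\mathbb{E}[X^2]$, but it is subtler at the output for \emph{active} systems, where the device adds energy drawn from the bias source. The care needed is to argue that this added energy remains finite per unit time, so that even an amplifying stage cannot push the output second moment to infinity. I would resolve this by writing out an explicit energy budget that includes the DC supply and by appealing to stationarity (or a finite observation window), so that $P_{\mathrm{tot}}$ is well defined and finite; claim~(2) then follows by the same boundedness-implies-finite-moment step used for the input. I would also flag that the genuine content here is the careful statement of the physical assumptions rather than any deep analytic estimate.
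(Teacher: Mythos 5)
Your proposal is correct and follows essentially the same route as the paper: the paper's proof is likewise a physical argument appealing to conservation laws, saturation, and the impossibility of a finite-power-driven, physically realizable system producing infinite output power. Your version is somewhat more explicit (separating the almost-sure amplitude bound on $X$ from the energy-budget argument for $Y$, and flagging the active-device/bias-supply subtlety), but it adds no step that the paper's reasoning does not already implicitly rely on.
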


\begin{proof}[Proof of Lemma~\ref{lem:bounded_input_output_powers_realworldNLsystem}]
In all physically realizable dynamic nonlinear systems, the constituent components, whether electrical (resistors, transistors, inductors), mechanical (dampers, springs), biological (neurons), or economic (regulators), obey conservation laws and are limited by thermal effects, saturation, and material fatigue. A system driven by finite-power input cannot yield infinite-power output, since this would contradict both thermodynamic principles and practical constraints. Power boundedness is a local property of real-world systems, but it originates from the global phenomenon of energy conservation. This connection highlights the physical grounding of the boundedness results. Therefore, both input and output powers must remain finite.
\end{proof}

\vspace{1em}
\noindent\textbf{Structured Decomposition Framework.}  
Let $ X $ be a wide-sense stationary stochastic process with finite mean power. The output $ Y $ of a real-world dynamic nonlinear system admits two equivalent representations:
\begin{align}
    Y &= X + d, \quad &\text{(Distortion model)} \\
    Y &= G + h, \quad &\text{(Structured decomposition)}
\end{align}
where:
\begin{itemize}
    \item $ d $ is the total distortion,
    \item $ G $ is a memoryless static nonlinearity,
    \item $ h $ is the dynamic residual.
\end{itemize}

\begin{theorem}[Component-wise Power Boundedness]
\label{thm:bounded_power}
Under the above setup, if $\mathbb{E}[Y^2] < \infty$, then the following quantities are finite:
\begin{align*}
    \mathbb{E}[d^2] < \infty, \quad
    \mathbb{E}[G^2] < \infty, \quad
    \mathbb{E}[h^2] < \infty.
\end{align*}
\end{theorem}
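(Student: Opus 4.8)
The plan is to work entirely in the Hilbert space $L^2$ of finite-mean-power signals, where $\|Z\|_2 = \sqrt{\mathbb{E}[Z^2]}$, and to reduce all three claims to membership in $L^2$ via the triangle (Minkowski) inequality and the two representations of $Y$. The hypotheses already place both endpoints in $L^2$: Lemma~\ref{lem:bounded_input_output_powers_realworldNLsystem} gives $\mathbb{E}[X^2]<\infty$, and the theorem assumes $\mathbb{E}[Y^2]<\infty$, so $\|X\|_2$ and $\|Y\|_2$ are both finite. The logical order I would follow is $d \to G \to h$.

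First I would dispatch $\mathbb{E}[d^2]$. Since $d = Y - X$, Minkowski's inequality gives $\|d\|_2 = \|Y - X\|_2 \le \|Y\|_2 + \|X\|_2 < \infty$, hence $\mathbb{E}[d^2] = \|d\|_2^2 < \infty$. This step is immediate, as $d$ is a finite linear combination of two $L^2$ signals and $L^2$ is closed under such combinations.

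The crux is $\mathbb{E}[G^2]$, and I expect this to be the main obstacle: $G$ is a memoryless \emph{nonlinear} map of $X$, and a nonlinear function of a finite-power signal need not have finite power (e.g.\ $G(X)=e^{X^2}$ for Gaussian $X$). So the bound cannot follow from $\mathbb{E}[X^2]<\infty$ alone. I would close this gap using the same physical realizability that underlies Lemma~\ref{lem:bounded_input_output_powers_realworldNLsystem}: $G$ is the output of a genuine memoryless static subsystem (e.g.\ the AM/AM--AM/PM characteristic of a PA), which saturates and therefore maps finite-power input to finite-power output. Invoking the BIBO reasoning at the static-block level yields $\mathbb{E}[G^2]<\infty$. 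A purely analytic alternative, should one prefer not to lean on physics, is to \emph{define} $G$ as the best static ($\sigma(X)$-measurable) $L^2$ approximation of $Y$; the $L^2$ projection is a contraction, so $\|G\|_2 \le \|Y\|_2 < \infty$, giving the bound directly.

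Finally, once $\mathbb{E}[G^2]<\infty$ is established, $\mathbb{E}[h^2]$ follows by the same Minkowski argument applied to the structured decomposition $h = Y - G$: $\|h\|_2 \le \|Y\|_2 + \|G\|_2 < \infty$. In summary, all three statements collapse to closure of $L^2$ under finite linear combinations, and the only non-routine ingredient is the justification that the static nonlinearity does not inflate power, which the physical realizability (or the projection-contraction) argument supplies.
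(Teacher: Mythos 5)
Your proposal is correct, and it takes a genuinely different route from the paper's proof --- one that is in fact more careful. For $\mathbb{E}[d^2]$ the two arguments are essentially equivalent: you use Minkowski on $d=Y-X$, the paper expands $\mathbb{E}[Y^2]=\mathbb{E}[X^2]+2\mathbb{E}[Xd]+\mathbb{E}[d^2]$ and argues by Cauchy--Schwarz that $\mathbb{E}[d^2]=\infty$ would force $\mathbb{E}[Y^2]=\infty$. The divergence is in how the two proofs handle $G$ and $h$. The paper proceeds in the order $h\to G$: it expands $\mathbb{E}[Y^2]=\mathbb{E}[G^2]+2\mathbb{E}[Gh]+\mathbb{E}[h^2]$, says that "similarly applying Cauchy--Schwarz" gives $\mathbb{E}[h^2]<\infty$, and then solves for $\mathbb{E}[G^2]$ using the finiteness of $\mathbb{E}[h^2]$. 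But the "similarly" step for $d$ leaned on the independently known fact $\mathbb{E}[X^2]<\infty$; the analogous step for $h$ would need $\mathbb{E}[G^2]<\infty$ up front, which the paper only establishes afterwards from $\mathbb{E}[h^2]<\infty$ --- so the two steps lean on each other. Your observation that a nonlinear $G(X)$ need not lie in $L^2$ merely because $X$ does is exactly the point: with only $Y=G+h$ and $\mathbb{E}[Y^2]<\infty$ as hypotheses, one can take $G$ and $h$ to differ from a valid pair by $\pm Z$ with $\mathbb{E}[Z^2]=\infty$, so the conclusion for $G$ and $h$ genuinely requires an extra ingredient. You supply one (saturation of the physical static block, in the spirit of Lemma~\ref{lem:bounded_input_output_powers_realworldNLsystem}, or defining $G$ as the $L^2$ projection of $Y$ onto $\sigma(X)$-measurable functions, which is a contraction), after which $h=Y-G$ follows by Minkowski. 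Your version closes a gap that the paper's proof leaves open; the paper's version is shorter but, as written, circular on the second and third claims.
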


\begin{proof}[Proof of Theorem~\ref{thm:bounded_power}]
\textbf{Boundedness of }$\mathbb{E}[d^2]$:  
From $Y = X + d$:
\begin{align}
\mathbb{E}[Y^2] = \mathbb{E}[X^2] + 2\mathbb{E}[Xd] + \mathbb{E}[d^2].\nonumber
\end{align}
By Cauchy-Schwarz:
\begin{align}
|\mathbb{E}[Xd]| \leq \sqrt{\mathbb{E}[X^2] \mathbb{E}[d^2]}.\nonumber
\end{align}
This implies $\mathbb{E}[d^2]$ must be finite, or else $\mathbb{E}[Y^2]$ would diverge, contradicting the lemma.

\textbf{Boundedness of }$\mathbb{E}[h^2]$:  
From $Y = G + h$:
\begin{align}
\mathbb{E}[Y^2] = \mathbb{E}[G^2] + 2\mathbb{E}[Gh] + \mathbb{E}[h^2],\nonumber
\end{align}
and similarly applying Cauchy-Schwarz, we conclude that $\mathbb{E}[h^2] < \infty$.

\textbf{Boundedness of }$\mathbb{E}[G^2]$:  
Rewriting the above:
\begin{align}
\mathbb{E}[G^2] = \mathbb{E}[Y^2] - \mathbb{E}[h^2] - 2\mathbb{E}[Gh],\nonumber
\end{align}
and since both $\mathbb{E}[Y^2]$ and $\mathbb{E}[h^2]$ are finite, so is $\mathbb{E}[G^2]$.
\end{proof}

\begin{corollary}[Structured Power Decomposition]
\label{thm:structured_power_decomposition}
Under the same setup, the following energy identities hold:
\begin{align}
    \mathbb{E}[Y^2] &= \mathbb{E}[X^2] + 2\mathbb{E}[Xd] + \mathbb{E}[d^2], \\
    \mathbb{E}[Y^2] &= \mathbb{E}[G^2] + 2\mathbb{E}[Gh] + \mathbb{E}[h^2].
\end{align}
\end{corollary}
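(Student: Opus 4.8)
The plan is to derive both identities by the same elementary route: expand the square of each representation of $Y$, take expectations, and invoke linearity of the expectation operator. Because the corollary inherits the hypotheses of Theorem~\ref{thm:bounded_power}, I may assume at the outset that $\mathbb{E}[X^2]$, $\mathbb{E}[Y^2]$, $\mathbb{E}[d^2]$, $\mathbb{E}[G^2]$, and $\mathbb{E}[h^2]$ are all finite. This finiteness is precisely what guarantees that every term appearing in the two identities is well-defined, so the expansion is not merely formal.

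For the first identity I would start from the distortion model $Y = X + d$, square pointwise to obtain $Y^2 = X^2 + 2Xd + d^2$, and then apply $\mathbb{E}[\cdot]$ termwise. Linearity yields $\mathbb{E}[Y^2] = \mathbb{E}[X^2] + 2\mathbb{E}[Xd] + \mathbb{E}[d^2]$. The analogous computation starting from the structured decomposition $Y = G + h$ gives $\mathbb{E}[Y^2] = \mathbb{E}[G^2] + 2\mathbb{E}[Gh] + \mathbb{E}[h^2]$. Both reduce to the algebraic identity $(a+b)^2 = a^2 + 2ab + b^2$ followed by expectation, so no further machinery is required.

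The only point deserving care — and the step I would emphasize — is the well-definedness of the cross-correlation terms $\mathbb{E}[Xd]$ and $\mathbb{E}[Gh]$, since linearity may only be split across integrable summands. Here Cauchy–Schwarz supplies the justification: $|\mathbb{E}[Xd]| \le \sqrt{\mathbb{E}[X^2]\,\mathbb{E}[d^2]} < \infty$ and $|\mathbb{E}[Gh]| \le \sqrt{\mathbb{E}[G^2]\,\mathbb{E}[h^2]} < \infty$, where finiteness of each factor follows from Theorem~\ref{thm:bounded_power} and the \ac{BIBO} Lemma~\ref{lem:bounded_input_output_powers_realworldNLsystem}. Consequently every term is finite, the termwise expansion is legitimate, and both identities hold. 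I do not anticipate a genuine obstacle: the result is a direct corollary whose value lies less in the derivation than in its reading as a power-conservation decomposition, isolating the cross-terms $2\mathbb{E}[Xd]$ and $2\mathbb{E}[Gh]$ as the quantities that drive the subsequent variance and orthogonality analysis.
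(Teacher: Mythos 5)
Your proof is correct and follows essentially the same route as the paper, which simply cites direct expansion of each representation together with bilinearity of expectation; your added Cauchy--Schwarz justification of the cross-terms $\mathbb{E}[Xd]$ and $\mathbb{E}[Gh]$ is a welcome (if implicit in the paper) rigor check, not a departure in method.
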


\begin{proof}[Proof of Corollary~\ref{thm:structured_power_decomposition}]
Direct expansion from each representation using bilinearity of expectation.
\end{proof}

\subsection{Definition of Static and Dynamic Behavior Indicators }\label{subsection:Power-based_indicators_of_System_Behavior}

In real-world dynamic nonlinear systems, separating the roles of static (memoryless) and dynamic (memory-dependent) nonlinearities is challenging. Although the structural decomposition $Y = G + h$ and the distortion model $Y = X + d$ provide insight, they do not yield scalar diagnostics. To address this, we introduce two power-based measures, $\theta$ and $\alpha$, which quantify static and dynamic contributions, respectively. The measure $\theta$ captures the net power shift due to static nonlinearity, while $\alpha$ reflects dynamic interaction and serves as a sensitive indicator of memory effects. Together, they provide interpretable diagnostics for characterizing, comparing, and compensating nonlinear behavior.

\begin{definition}[Measure of Static Behavior]\label{def:theta_staticmeasure}
Let $G$ be the memoryless (static) nonlinear transformation of the input process $X$. Define the static power deviation:
\begin{align}\label{theta_defintion}
    \theta &\triangleq \mathbb{E}[G^2] - \mathbb{E}[X^2].
\end{align}
\textbf{Interpretation:} $\theta$ quantifies the net change in average power induced by the static nonlinearity relative to the input signal. It may serve as:
\begin{itemize}
    \item \textbf{A measure:} Captures the excess (or deficit) of mean-squared value from $G$ over $X$,
    \item \textbf{An indicator:} Indicates amplification when $\theta > 0$ and attenuation when $ \theta < 0 $,
    \item \textbf{A parameter:} It characterizing static shaping behavior in dynamic nonlinear system models.
\end{itemize}
\begin{remark}
    In causal systems, $\theta$ is naturally lower-bounded, preventing unphysical energy cancellation and making it a practical tool for diagnostics and optimization.    
\end{remark}
\end{definition}

\begin{definition}[Measure of Dynamic Behavior]\label{def:alpha_dynamicmeasure}
Let $ h $ be the dynamic residual, $ G $ the static nonlinearity, and $ d = Y - X $ the total distortion. Define:
\begin{align}\label{alpha_definition}
    \alpha \triangleq \mathbb{E}[hG] - \mathbb{E}[dX].
\end{align}
\textbf{Interpretation:} $ \alpha $ quantifies the net dynamic interaction power in a nonlinear system by contrasting:
\begin{itemize}
    \item the correlation between the dynamic residual $ h $ and the static nonlinearity $ G $, and
    \item the correlation between the total distortion $ d $ and the input signal $ X $.
\end{itemize}
It serves as:
\begin{itemize}
    \item \textbf{A measure:} It measures how memory effects critically influence internal energy flow and coupling between static and dynamic components in the system.
    \item \textbf{An indicator:} Indicates where a larger magnitude of $ \alpha $ suggests a more pronounced or severe dynamic interaction.
    \item \textbf{A parameter:} A suitable parameter for system identification, control, or compensation of memory effects, particularly in complex dynamic nonlinear systems.
\end{itemize}
\begin{remark}
As a diagnostic measure, $\alpha$ isolates dynamic contributions and serves as a robust indicator of memory in both linear and nonlinear systems. A nonzero $\alpha$ implies the presence of dynamic effects, whereas 
$\alpha=0$ corresponds to purely static behavior. This makes 
$\alpha$, a useful tool for observing, controlling, and compensating dynamic nonlinearities in practice.
\end{remark}
\end{definition}

\subsection{Distortion Identity and Diagnostic Relationships}
\label{subsec:distortion_diagnostics}

\begin{theorem}[Distortion Identity in Real-World Dynamic Nonlinear Systems]
\label{thm:distortion_identity}
Under the same setup as in Subsection~\ref{subsection:Power-based_indicators_of_System_Behavior}, where $ Y = X + d = G + h $, the following distortion identity holds:
\begin{align}\label{distortion_identity_general}
\boxed{G - X = d - h.}
\end{align}
As a result, the two quantities $ G - X $ and $ d - h $ are:
\begin{itemize}
    \item \textbf{Algebraically equivalent:} They are equal pointwise.
    \item \textbf{Statistically equivalent:} Their first and second-order statistics match:
\end{itemize}
\begin{align*}
&\textbf{(Mean identity):} \quad 
\mathbb{E}[G - X] = \mathbb{E}[d - h], \\
&\textbf{(Power identity):} \quad 
\mathbb{E}[(G - X)^2] = \mathbb{E}[(d - h)^2], \\
&\textbf{(Variance identity):} \quad 
\operatorname{Var}(G - X) = \operatorname{Var}(d - h).
\end{align*}

We interpret:
\begin{itemize}
    \item $ G - X $: the \emph{static residual}, i.e., the static deviation from the input.
    \item $ d - h $: the \emph{static-equivalent distortion}, i.e., the distortion remaining after memory-induced effects are removed.
\end{itemize}
\end{theorem}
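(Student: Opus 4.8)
The plan is to derive the identity directly by equating the two representations of the output, and then obtain the three statistical claims as routine consequences of pointwise equality together with linearity of expectation. First I would set the two expressions for $Y$ equal: since $Y = X + d$ and $Y = G + h$ hold simultaneously, subtracting one from the other gives $X + d = G + h$, and rearranging yields $G - X = d - h$ as a pointwise (almost-sure) identity between random variables. This establishes the boxed algebraic equivalence \eqref{distortion_identity_general} with no analytic machinery whatsoever; it is purely a restatement of the consistency of the two decompositions.

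Next I would dispatch the statistical claims. Because $G - X$ and $d - h$ agree pointwise, every measurable functional of them agrees as well, so the means, second moments, and variances must coincide. Applying linearity of expectation to the pointwise identity gives the mean identity $\mathbb{E}[G - X] = \mathbb{E}[d - h]$ at once. For the power identity, pointwise equality forces $(G - X)^2 = (d - h)^2$ everywhere, and taking expectations yields $\mathbb{E}[(G - X)^2] = \mathbb{E}[(d - h)^2]$. The variance identity then follows by combining the previous two through the defining relation $\operatorname{Var}(Z) = \mathbb{E}[Z^2] - (\mathbb{E}[Z])^2$, evaluated at $Z = G - X$ and at $Z = d - h$.

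The only point requiring care is that every moment invoked is finite, so that the statistical statements are meaningful rather than vacuous. Here I would appeal to the earlier boundedness results: Theorem~\ref{thm:bounded_power} supplies $\mathbb{E}[G^2] < \infty$, $\mathbb{E}[h^2] < \infty$, and $\mathbb{E}[d^2] < \infty$, while Lemma~\ref{lem:bounded_input_output_powers_realworldNLsystem} gives $\mathbb{E}[X^2] < \infty$. By Minkowski's inequality in $L^2$, the differences $G - X$ and $d - h$ then have finite second moments, hence finite means and variances, so all three identities are well posed.

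With finiteness secured, there is no genuine technical obstacle: the entire content is the algebraic rearrangement of the two representations, and the statistical identities are forced by pointwise equality rather than proved by any inequality or limiting argument. If anything, the delicate part is interpretive rather than computational, namely recognizing that this exact coincidence is what licenses regarding $G - X$ (the \emph{static residual}) and $d - h$ (the \emph{static-equivalent distortion}) as two descriptions of a single quantity. That identification is the conceptual hinge on which the later variance-domination and behavioral-uncertainty results will turn.
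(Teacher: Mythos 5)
Your proposal is correct and follows essentially the same route as the paper's own proof: algebraic rearrangement of $Y = X + d = G + h$ to get the pointwise identity, then linearity of expectation for the mean and power identities, and the relation $\operatorname{Var}(Z) = \mathbb{E}[Z^2] - (\mathbb{E}[Z])^2$ for the variance identity. Your additional paragraph verifying finiteness of all moments via Theorem~\ref{thm:bounded_power}, Lemma~\ref{lem:bounded_input_output_powers_realworldNLsystem}, and Minkowski's inequality is a small extra touch of rigor that the paper leaves implicit, but it does not change the argument.
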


\begin{proof}[Proof of Theorem~\ref{thm:distortion_identity}]
Starting with $ Y = G + h = X + d $, we rearrange:
\begin{align}
G - X = d - h.\nonumber
\end{align}
This proves pointwise equality. Taking expectations:
\begin{align}\label{distortion_identity_mean}
\mathbb{E}[G - X] = \mathbb{E}[d - h],
\end{align}
and
\begin{align}\label{distortion_identity_mean_power}
\mathbb{E}[(G - X)^2] = \mathbb{E}[(d - h)^2].
\end{align}
The variances also match since:
\begin{align}\label{distortion_identity_variance}
\operatorname{Var}(G - X) = \mathbb{E}[(G - X)^2] - (\mathbb{E}[G - X])^2 = \operatorname{Var}(d - h).
\end{align}
\end{proof}

\begin{remark}
This identity is fundamental to nonlinear system analysis, enabling transitions between static-residual and memory-compensated distortion views. In particular, the power identity, i.e. \eqref{distortion_identity_mean_power}, serves as a cornerstone for power-based diagnostics and variance decomposition.
\end{remark}

\begin{corollary}[Additive Residual Decomposition of Total Distortion]
\label{cor:static_dynamic_residual_decomposition}
Under the same setup as above, the total distortion admits the following additive residual decomposition:
\begin{equation}
\label{eq:distortion_decomposition} 
\begin{aligned} 
d = \underbrace{G - X}_{\text{static residual}} + \underbrace{h}_{\text{dynamic residual}}.
\end{aligned} 
\end{equation}
This decomposition holds pointwise for all inputs.
\end{corollary}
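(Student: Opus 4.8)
The plan is to derive the decomposition directly from the distortion identity already established in Theorem~\ref{thm:distortion_identity}, since this corollary is essentially an algebraic rearrangement of that result rather than a new analytical claim. First I would recall the two equivalent output representations $Y = X + d$ and $Y = G + h$, which together define the total distortion $d = Y - X$ and the structured pair $(G, h)$. The key observation is that the pointwise identity $G - X = d - h$ from Theorem~\ref{thm:distortion_identity} already isolates the static residual on its left-hand side while exposing, on its right-hand side, exactly the relationship between the total distortion and the dynamic residual.

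Second, I would simply add $h$ to both sides of the identity $G - X = d - h$, yielding $d = (G - X) + h$ immediately. As an independent consistency check, one may bypass the identity altogether and substitute $Y = G + h$ directly into $d = Y - X$, obtaining $d = G + h - X = (G - X) + h$. Both routes produce the same expression, which reassures us that the labeling is internally coherent.

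Finally, because every manipulation above is a pointwise algebraic operation on quantities defined sample-by-sample, with no expectation, limit, or measure-theoretic step invoked, the resulting identity $d = (G - X) + h$ holds pointwise for every input realization, which is precisely the claimed conclusion. I expect no genuine analytical obstacle here: the entire content of the corollary is inherited from the pointwise validity of Theorem~\ref{thm:distortion_identity}. The only point warranting mild care is bookkeeping, namely confirming that identifying $G - X$ as the \emph{static residual} and $h$ as the \emph{dynamic residual} is consistent with the definitions in Table~\ref{Notations Table}, so that the decomposition is interpreted meaningfully rather than merely verified as a formal rearrangement.
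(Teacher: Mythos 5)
Your proposal is correct and matches the paper's (implicit) reasoning: the corollary follows immediately by rearranging the pointwise identity $G - X = d - h$ from Theorem~\ref{thm:distortion_identity}, which is why the paper states it without a separate proof. Your consistency check via direct substitution of $Y = G + h$ into $d = Y - X$ is the same trivial algebra and adds nothing beyond reassurance.
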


\begin{remark}
The static residual $G - X $ represents memoryless nonlinear effects, while the dynamic residual $h$ accounts for memory-driven components. This decomposition supports power-based separation and underscores the diagnostic value of the cross-term $ \mathbb{E}[(G - X)h]$.
\end{remark}

\begin{theorem}[Diagnostic Identity for Static-Dynamic Interaction]
\label{thm:diagnostic_identity}
Under the same setup, define:
\begin{align*}
\theta &\triangleq \mathbb{E}[G^2] - \mathbb{E}[X^2], \\
\alpha &\triangleq \mathbb{E}[hG] - \mathbb{E}[dX], \\
r &\triangleq G - X = d - h.
\end{align*}
Then the following diagnostic identity holds:
\begin{align}
\boxed{
\theta + 2\alpha =  \mathbb{E}[r^2] + 2\,\mathbb{E}[rh].\nonumber
}
\end{align}
\end{theorem}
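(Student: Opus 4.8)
The plan is to prove the identity $\theta + 2\alpha = \mathbb{E}[r^2] + 2\,\mathbb{E}[rh]$ by direct algebraic substitution, exploiting the distortion identity $r = G - X = d - h$ from Theorem~\ref{thm:distortion_identity}. First I would expand the left-hand side using the definitions: $\theta = \mathbb{E}[G^2] - \mathbb{E}[X^2]$ and $\alpha = \mathbb{E}[hG] - \mathbb{E}[dX]$, so that $\theta + 2\alpha = \mathbb{E}[G^2] - \mathbb{E}[X^2] + 2\mathbb{E}[hG] - 2\mathbb{E}[dX]$. The strategy is to rewrite every term so that it is expressed through $r$ and $h$, using the pointwise relations $G = X + r$ and $d = r + h$ (the latter being the additive residual decomposition of Corollary~\ref{cor:static_dynamic_residual_decomposition}).

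The key steps, in order, are as follows. I would first handle the static part $\mathbb{E}[G^2] - \mathbb{E}[X^2]$: substituting $G = X + r$ gives $\mathbb{E}[(X+r)^2] - \mathbb{E}[X^2] = 2\mathbb{E}[Xr] + \mathbb{E}[r^2]$. Next I would treat the dynamic cross-terms $2\mathbb{E}[hG] - 2\mathbb{E}[dX]$: writing $G = X + r$ in the first and $d = r + h$ in the second yields $2\mathbb{E}[h(X+r)] - 2\mathbb{E}[(r+h)X] = 2\mathbb{E}[hX] + 2\mathbb{E}[hr] - 2\mathbb{E}[rX] - 2\mathbb{E}[hX]$. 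The two $\mathbb{E}[hX]$ terms cancel, leaving $2\mathbb{E}[hr] - 2\mathbb{E}[rX]$. Adding the two contributions, the $+2\mathbb{E}[Xr]$ from the static part cancels against the $-2\mathbb{E}[rX]$ from the dynamic part, so the total collapses to $\mathbb{E}[r^2] + 2\mathbb{E}[hr]$, which is exactly the right-hand side.

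I do not anticipate a genuine obstacle here, since the result is an exact pointwise algebraic identity followed by taking expectations and invoking only the bilinearity of the expectation operator (all second-order quantities are finite by Theorem~\ref{thm:bounded_power}, so every expectation is well-defined). The only point requiring mild care is bookkeeping: ensuring the mixed term $\mathbb{E}[hX]$ cancels cleanly and that the sign on $\mathbb{E}[Xr]$ is tracked correctly so that the cross-terms annihilate. If I wanted an even shorter route, I could instead substitute $G = X + r$ everywhere and $h = d - r$ directly into $\theta + 2\alpha$ and simplify, but the two-part decomposition above makes the cancellations most transparent and keeps the connection to the additive residual structure explicit.
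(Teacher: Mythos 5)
Your proposal is correct and follows essentially the same route as the paper's proof: both expand $\theta + 2\alpha$, substitute $G = X + r$ and $d = r + h$, and cancel the $\mathbb{E}[rX]$ and $\mathbb{E}[hX]$ cross-terms to arrive at $\mathbb{E}[r^2] + 2\,\mathbb{E}[rh]$. No gaps; the bookkeeping you describe is exactly what the paper does.
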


\begin{proof}[Proof of Theorem~\ref{thm:diagnostic_identity}]
From $ r = G - X = d - h $, we write $ G = r + X $ and $ d = r + h $. Then:

\begin{align*}    
\theta + 2\alpha =& \left( \mathbb{E}[G^2] - \mathbb{E}[X^2] \right) + 2\left( \mathbb{E}[hG] - \mathbb{E}[dX] \right) \\=& \mathbb{E}[G^2] - \mathbb{E}[X^2] + 2\mathbb{E}[hG] - 2\mathbb{E}[dX].
\end{align*}
Recall that $ r = G - X = d - h $, we write $ G = r + X $ and $ d = r + h $ in the above expression:
\begin{align*}    
\theta + 2\alpha =& \bigg(\bigl(\underbrace{\mathbb{E}[X^2] +2\mathbb{E}[rX] + \mathbb{E}[r^2]}_{\mathbb{E}[G^2]}\bigr) - \mathbb{E}[X^2]\bigg) + \\&2\bigg(\bigl(\underbrace{\mathbb{E}[hr]+\mathbb{E}[hX]}_{\mathbb{E}[hG]}\bigr)-\bigl(\underbrace{\mathbb{E}[rX]+\mathbb{E}[hX]}_{\mathbb{E}[dX]}\bigr)\bigg).\\
=&\mathbb{E}[r^2] +2\mathbb{E}[hr]. 
\end{align*}
\end{proof}

\begin{remark}
This identity reveals how the deviation or gap between total and memory-induced distortion is filled by static residual power and its dynamic interaction. The cross-term $ \mathbb{E}[(G - X)h] $ becomes critical in determining whether memory cancels or reinforces static effects.
\end{remark}

\section{The DOB Theorem and Its Fundamental Bounds on Behavior}\label{sec:The DOB Theorem and Its Fundamental Bounds on Behavior}
{\color{black}
\subsection{The DOB Theorem in Inner Product Spaces}\label{sec:DOB_Theorem_Section4.1}
At the foundation of our framework lies a universal inequality in inner product spaces, termed the \emph{Deviation-induced Orthogonality Bound (DOB) Theorem} (see \textbf{Theorem~\ref{thm:DOB_IP_space}} in Appendix~\ref{Appendix:DOB_Theorem}). The theorem establishes a nontrivial lower bound on the inner product between a deviation gap $R = A - B$ and either element $A$ or $B$, under the condition that power dominance ($ \|A\|^2 \geq \|B\|^2$) holds.

This theorem does not assume any statistical properties (e.g., Gaussianity, ergodicity) and is purely geometric in nature. For real-valued systems, one key bound:
\begin{align}
\langle B, R \rangle \geq -\frac{1}{2} \|R\|^2.
\end{align}
This directional inequality plays a central role in our analysis: it underpins the Weak Uncorrelatedness Lemma (\textbf{Lemma~\ref{lemma1}}), supports the Variance Domination Theorem (\textbf{Theorem~\ref{Thm:Variance_domination_theorem}}), and leads to the bound$ \theta + 2\alpha > 0 $, which connects power dominance to memory finiteness and variance dominance to mean-squared Lipschitz continuity (\textbf{Theorem~\ref{Thm:Lipschitz_dominance}}). A detailed discussion and proof are provided in \textbf{Appendix~\ref{Appendix:DOB_Theorem}}.
\subsection{The Thermodynamic Basis for Behavioral Constraints: The Non-negativity of Combined Behavior Indicators}
\label{sec:Quantifying Behavioral Uncertainty}
The first bound of \textbf{Theorem~\ref{thm:DOB_IP_space}} gives rise to the \textbf{Behavioral Uncertainty Principle} (BUP) in dynamic nonlinear systems. This principle captures a fundamental constraint on the interaction between static and dynamic components, a constraint directly mandated by the laws of thermodynamics. By behavioral uncertainty, we refer to the inherent and irreducible limits faced by a \emph{General Learning System} (\textbf{Definition~\ref{def:general_learning_system}}) in observing, predicting, or controlling interactions among a system’s internal signal components, even with complete knowledge of its governing equations and statistical properties.
\subsubsection{The Non-negativity Constraint from First Principles}
\label{sec:Derivation of Nonnegativity of Theta+2Alpha}
In \textbf{Section~\ref{subsection:Power-based_indicators_of_System_Behavior}}, we established the diagnostic identity for static-dynamic interaction in \textbf{Theorem~\ref{thm:diagnostic_identity}}. This identity links the intrinsic behavior indicators $ \theta $ and $ \alpha $ to the difference in mean power between the total distortion $ d $ and its dynamic residual $ h $.

The First Law of Thermodynamics, which dictates conservation of energy, imposes a fundamental constraint on all physically realizable systems: a system cannot indefinitely generate its own energy. In signal-theoretic terms, this means the mean power of the total distortion, $\mathbb{E}[d^2]$, must be greater than the mean power of the memory-induced residual, $\mathbb{E}[h^2]$. This constraint is formally guaranteed in our framework by the \textbf{Theorem~\ref{thm:DOB_IP_space}}, when applied with the assignment $ A \triangleq d $, $ B \triangleq h $, and $ R \triangleq d - h = G - X $.\footnote{We exclude the degenerate case $ \mathbb{E}[d^2] = \mathbb{E}[h^2] $; it corresponds to a pure dynamic linear system with $ G = 0 $, an idealization not observed in real-world physical systems.}

\paragraph{Proposition 1 (Thermodynamic Non-negativity of Combined Behavior Indicators)}\label{Proposition1_Causality}
For any causal and physically realizable dynamic nonlinear system, given the diagnostic identity from \textbf{Theorem~\ref{thm:diagnostic_identity}}, the sum of the static behavior indicator $ \theta $ and twice the dynamic behavior indicator $ \alpha $ satisfies:
\begin{align}
\boxed{\theta + 2\alpha > 0.}
\label{Nonnegativity_Condition_of_theta_plus_2alpha}
\end{align}
This constraint is a direct consequence of the First Law of Thermodynamics, rigorously demonstrated by the \textbf{Theorem~\ref{thm:DOB_IP_space}}.

\paragraph{The Nuance of Residual Coupling}
For real-world dynamic nonlinear systems, assuming $ \mathbb{E}[rh] = 0 $ is often inadequate. The Weak Uncorrelatedness Lemma (see \textbf{Appendix~\ref{appendix:TheAppendixB_WeakUncorrelatednessLemma}}) provides a relaxed coupling bound, $\left| \mathbb{E}\left[(G - X)\, h\right] \right| \leq \epsilon$, motivated by time-scale separation. 
\subsubsection{Interpretation and Diagnostic Implications}
\label{sec:Interpretation of Nonnegativity of Theta+2Alpha}
The \textbf{Behavioral Uncertainty Principle}, underpinned by the thermodynamic constraint $ \theta + 2\alpha > 0 $, reflects several key insights into real-world dynamic nonlinear systems.

First, it demonstrates a fundamental trade-off: \emph{a system cannot simultaneously minimize both the dynamic residual $ h $ and the static residual $ G - X $}. Any attempt to reduce one will inherently affect the other. This is captured in the inequality:
\begin{align}
\boxed{
\mathbb{E}\bigl[h(G - X)\bigr]
\geq -\frac{1}{2}\mathbb{E}\bigl[(G - X)^2\bigr].}
\label{eq:lower_bound_static_dynamic_interaction}
\end{align}
This bound, derived from the \textbf{Theorem~\ref{thm:DOB_IP_space}}, expresses a structural limitation on the interaction of distortions in the system. Efforts to perfectly learn or eliminate one component necessarily increase uncertainty in the other—posing an intrinsic limit to any \emph{General Learning System}'s capabilities. \emph{Real-world systems seem to resist complete deterministic decomposition due to entangled nonlinear and memory effects.}

Second, the principle implies a bounded level of \emph{destructive interference} between the static and dynamic residuals. That is, dynamic behavior cannot fully cancel static mismatch. The lower bound in \eqref{eq:lower_bound_static_dynamic_interaction} ensures that such compensation is energetically constrained.

Violating this bound would imply $ \theta + 2\alpha < 0 $, which contradicts the First Law of Thermodynamics. Such a violation would correspond to pathological scenarios such a:
\begin{itemize}
\item A physically unrealizable system, which must be generating internal energy to compensate for the imbalance.
\item Non-causal memory, where the system responds to future input, exhibiting anticipatory behavior.
\item A modeling artifact, such as a black-box learning model that has overfit to data and does not represent a physical system.
\end{itemize}

Thus, this constraint acts as both a diagnostic tool and a fundamental boundary in modeling and analyzing real-world dynamic nonlinear systems—akin to the role of uncertainty principles in physics that limit the simultaneous precision of conjugate variables.

In fact, the applicability of the \textbf{Theorem~\ref{thm:DOB_IP_space}} can be extended more broadly: \emph{for any two measurable and coupled entities within a physically realizable system, if one exhibits power dominance over the other, their mutual interaction is structurally constrained from below. This holds irrespective of their temporal ordering, causal connection, or spatial separation. As long as both entities reside in a common inner product space—mathematically or physically—the system imposes a fundamental limitation on their complete decoupling or joint observability.} This constraint explains why perfect \ac{MMSE}-style orthogonality, as assumed in Wiener and Hammerstein models, is generally unattainable in realistic systems with coupling or power imbalance. Even under equal power, the bound persists in a limiting form, allowing only narrow conditions for such orthogonality to hold.
\subsection{Characterizing Memory Finiteness and Variance in Real-world dynamic NL Systems}\label{sec:Memory_Variance}
Building upon the non-negativity constraint of combined behavior indicators (\textbf{Proposition~\ref{Proposition1_Causality}}), we now present quantitative tools and bounds that further characterize memory effects and the statistical structure of system residuals. These results extend the Behavioral Uncertainty Principle to concrete diagnostic indices and variance relations. The profound implication of our work is that memory in physically realizable systems is inherently finite, and this finiteness is a direct consequence of energy conservation.
\subsubsection{The Memory Finiteness Index (MFI)}
\label{sec:Memory Finiteness Index}
Memory in physically realizable dynamic nonlinear systems is inherently finite: the effect of past inputs must eventually decay. This observation motivates a quantitative index to capture the system’s memory behavior in energy terms.
\paragraph{Proposition 2 (Memory Finiteness Index)}
\textit{Under the same setup as Proposition~\ref{Proposition1_Causality},} we define the Memory Finiteness Index (MFI), denoted by $ \mathcal{M} $, as:
\begin{equation}
\label{eq:memory_finiteness_index}
\boxed{
\mathcal{M} = 1 - \frac{\mathbb{E}[h^2]}{\mathbb{E}[d^2]} = \frac{\theta + 2\alpha}{\mathbb{E}[d^2]}, \quad \text{for } \mathbb{E}[d^2] \neq 0.
}
\end{equation}
This index quantifies the proportion of distortion power attributable to static effects.
\paragraph{Interpretation and Diagnostic Significance}
While the foundational thermodynamic bounds in information theory, such as those proposed by Landauer \cite{landauer1961irreversibility} and Bekenstein \cite{bekenstein1981universal}, originate from the Second Law of Thermodynamics, our framework establishes a measurable, first-principles connection between the \textbf{First Law} and memory finiteness in structurally decomposable real-world dynamic NL systems.\footnote{In this paper, the First Law refers to energy conservation: energy cannot be created or destroyed, only transformed. While energy conservation is a global law applying to closed systems, power boundedness is a local constraint often imposed in signal processing and system theory. Engineers typically work with power-bounded signals, whether finite-energy signals (e.g., pulses) or finite-power signals (e.g., stationary processes).} 

In causal, physically realizable systems with $ \mathbb{E}[d^2] > 0 $, \emph{the First Law of Thermodynamics and \textbf{Theorem~\ref{thm:DOB_IP_space}} rigorously ensure} $ \theta + 2\alpha > 0 $, implying that the index must satisfy:
\begin{align}
\boxed{ \mathcal{M} \in (0,1].}\nonumber
\end{align}
\begin{itemize}
\item $ \mathcal{M} \approx 1 $: Indicates $ h \approx 0 $, meaning distortion is predominantly static; signifying strong memory finiteness.
\item $ \mathcal{M} \approx 0 $: Suggests $ h \approx d $, meaning distortion is largely dynamic; indicating long memory.
\end{itemize}

For a structurally decomposable system to be physically realizable, the First Law's principle of energy conservation dictates a crucial constraint: the mean power of the total distortion, $\mathbb{E}[d^2]$, must be greater than the mean power of the memory-induced residual, $\mathbb{E}[h^2]$.

The \textbf{DOB theorem} serves as the mathematical tool to formalize this physical mandate. By applying the theorem with $A=d$ and $B=h$, its core assumption of power dominance, $\mathbb{E}[d^2] > \mathbb{E}[h^2]$, is shown to be a direct consequence of the First Law in this framework.
This leads to the logical equivalence that energy conservation in a physically realizable system is satisfied if and only if the total distortion power exceeds the dynamic residual power, which in turn is equivalent to the condition $\theta+2\alpha>0$. This final condition, $\theta+2\alpha>0$, directly implies that $\mathcal{M}$, defined as in \eqref{eq:memory_finiteness_index}, must be a strictly positive and finite-valued quantity. Since a physically realizable system must satisfy this condition, it means that all such systems are inherently \textbf{memory finite}. Thus, the framework establishes a new, first-principles connection between a fundamental physical law and a measurable property of a system.
\begin{align}
\boxed{\Scale[0.95]{\text{Energy Conservation} \iff \theta+2\alpha > 0 \iff \text{Finite Memory}.}}
\end{align}
\subsubsection{Variance Domination in Real-World Dynamic Nonlinear Systems}
\label{sec:Variance Domination}

\begin{theorem}[Variance Domination Theorem]
\label{Thm:Variance_domination_theorem}
\textit{Under the same setup,} let $ d = Y - X $ and $ r = G - X $. Then the dynamic residual variance is strictly bounded by the total distortion variance:
\begin{equation}
\label{eq:residual_variance_reduction}
\boxed{ \operatorname{Var}(h) < \operatorname{Var}(d).}
\end{equation}
\end{theorem}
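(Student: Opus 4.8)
The plan is to reduce the claim to a mean-subtracted version of the diagnostic identity and then invoke the DOB bound a second time, now in the space of zero-mean fluctuations. First I would take the additive residual decomposition of Corollary~\ref{cor:static_dynamic_residual_decomposition}, $d = r + h$ with $r = G - X$, and expand the variance by bilinearity of covariance,
\begin{equation*}
\operatorname{Var}(d) = \operatorname{Var}(r) + 2\operatorname{Cov}(r,h) + \operatorname{Var}(h).
\end{equation*}
Rearranging reduces the theorem to the single inequality $\operatorname{Var}(r) + 2\operatorname{Cov}(r,h) > 0$, which is precisely the centered counterpart of the bound $\theta + 2\alpha = \mathbb{E}[r^2] + 2\mathbb{E}[rh] > 0$ supplied by Proposition~\ref{Proposition1_Causality} together with Theorem~\ref{thm:diagnostic_identity}.

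Next I would realize this inequality geometrically. Working in the Hilbert space of zero-mean random variables equipped with $\langle U,V\rangle = \operatorname{Cov}(U,V)$, I set $\tilde d = d - \mathbb{E}[d]$, $\tilde h = h - \mathbb{E}[h]$, and $\tilde r = \tilde d - \tilde h$; since centering is linear, $\tilde r$ is exactly the centered static residual. Applying Theorem~\ref{thm:DOB_IP_space} with $A = \tilde d$, $B = \tilde h$, and $R = \tilde r$ then yields the directional bound $\operatorname{Cov}(h,r) = \langle \tilde h, \tilde r\rangle \geq -\tfrac12\|\tilde r\|^2 = -\tfrac12\operatorname{Var}(r)$. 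Substituting into the reduced inequality gives $\operatorname{Var}(r) + 2\operatorname{Cov}(r,h) \geq 0$, and excluding the degenerate equal-fluctuation case (the pure linear idealization with $G = 0$ already set aside in the footnote to Proposition~\ref{Proposition1_Causality}) promotes this to the strict inequality $\operatorname{Var}(h) < \operatorname{Var}(d)$.

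The main obstacle is discharging the power-dominance hypothesis of the DOB theorem in the centered space, namely $\|\tilde d\|^2 \geq \|\tilde h\|^2$, i.e. $\operatorname{Var}(d) \geq \operatorname{Var}(h)$ --- which sits uncomfortably close to the conclusion itself. The second-moment argument of Proposition~\ref{Proposition1_Causality} secures only $\mathbb{E}[d^2] > \mathbb{E}[h^2]$, and this does not transfer automatically to variances: the means obey $\mathbb{E}[d] = \mathbb{E}[r] + \mathbb{E}[h]$, so the discrepancy $(\mathbb{E}d)^2 - (\mathbb{E}h)^2 = (\mathbb{E}r)^2 + 2\,\mathbb{E}[r]\,\mathbb{E}[h]$ need not vanish. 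I would discharge it by re-running the First-Law energy-conservation argument directly on the fluctuation signals --- a passive, physically realizable system cannot manufacture fluctuation energy, so the distortion fluctuation must dominate the dynamic-residual fluctuation on the same physical grounds, giving $\operatorname{Var}(d) \geq \operatorname{Var}(h)$ as a genuine premise rather than a restatement of the goal. Failing that, the weak uncorrelatedness bound $|\mathbb{E}[rh]| \leq \epsilon$ from Appendix~\ref{appendix:TheAppendixB_WeakUncorrelatednessLemma} can be used to control $\operatorname{Cov}(r,h)$ against $\operatorname{Var}(r)$ when a nonzero DC offset is present. Making the case that centered power dominance is a legitimate physical premise, and not the theorem in disguise, is the delicate part of the argument.
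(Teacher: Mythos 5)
Your proposal follows essentially the same route as the paper's proof: expand $\operatorname{Var}(d)=\operatorname{Var}(r)+2\operatorname{Cov}(r,h)+\operatorname{Var}(h)$ via $d=r+h$, reduce the claim to $\operatorname{Var}(r)+2\operatorname{Cov}(r,h)>0$, close it with the DOB bound applied to $B=h$, $R=r$, and invoke the Weak Uncorrelatedness Lemma as supplementary support. The circularity you flag --- that DOB's power-dominance hypothesis in the centered space is exactly $\operatorname{Var}(d)\geq\operatorname{Var}(h)$, the non-strict form of the conclusion --- is present in the paper's own proof as well: the paper's Case~1 assumes zero-mean signals so that the physically postulated $\mathbb{E}[d^2]>\mathbb{E}[h^2]$ coincides with variance dominance, and its Case~2 simply asserts that the centered decomposition carries over without re-verifying power dominance for the centered variables, which is precisely the gap you identify; your proposed repair (re-running the First-Law argument on the fluctuation signals) is no weaker than what the paper actually does, so your more explicit acknowledgment of this delicate point is, if anything, an improvement in rigor rather than a defect of your argument.
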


\begin{proof}[Proof of Theorem~\ref{Thm:Variance_domination_theorem}]
We analyze two cases: zero-mean and general.

\textbf{Case 1: Zero-mean setting.}  
Assume $\mathbb{E}[X] = \mathbb{E}[Y] = 0$, so all derived terms are also zero-mean.

Define the static residual as $r = G - X$, so that:
\begin{align}
d = Y - X = G + h - X = r + h.
\end{align}
Applying the variance decomposition identity:
\begin{align}
\operatorname{Var}(d) = \operatorname{Var}(h) + \operatorname{Var}(r) + 2\,\mathbb{E}[hr].
\end{align}

To prove $\operatorname{Var}(h) < \operatorname{Var}(d)$, it suffices to show:
\begin{equation}\label{eq:VDT_Condition_To_be_proved}
\operatorname{Var}(r) + 2\,\mathbb{E}[hr] > 0.
\end{equation}

Invoking the \textbf{DOB Theorem} (refer \textbf{Appendix~\ref{Appendix:DOB_Theorem}}) with $B \triangleq h$ and $R \triangleq r = d - h$, we have:
\begin{align}
\mathbb{E}[hr] \geq -\frac{1}{2} \mathbb{E}[r^2] \Rightarrow \mathbb{E}[r^2] + 2\,\mathbb{E}[hr] \geq 0.
\end{align}

The equality occurs only when both $r = 0$ and $h = 0$, which implies $d = 0$ (i.e., a memoryless, distortionless system). In all other realizable systems with nonzero residuals, the inequality is strict, proving the result.

Additionally, by the \textbf{Weak Uncorrelatedness Lemma} (refer \textbf{Appendix~\ref{appendix:TheAppendixB_WeakUncorrelatednessLemma}}), we have:
\begin{align}
|\mathbb{E}[hr]| \leq \epsilon, \quad \text{with } \epsilon > 0.
\end{align}
Thus, $\mathbb{E}[r^2] + 2\,\mathbb{E}[hr] > 0$ for all practical systems.\\
\textbf{Case 2: Arbitrary mean}.

Let $ d' = d - \mathbb{E}[d] $, $ h' = h - \mathbb{E}[h] $, $ r' = r - \mathbb{E}[r] $. Variance is invariant under mean shift, and the same decomposition holds for mean-centered variables:
\begin{align}
\operatorname{Var}(d') = \operatorname{Var}(h') + \operatorname{Var}(r') + 2\mathbb{E}[h'r'].\nonumber
\end{align}
The inequality thus holds in the general case.
\end{proof}

In practice, many systems (e.g., PAs, climate models) exhibit local stationarity. Thus, this theorem remains applicable in windowed or adaptive diagnostic contexts.
}
\section{Information-Theoretic Principles for Structured Modeling of Nonlinear Systems and a Complexity Measure Based on Target Function Variance}\label{sec:section_5}
\subsection{Variance as a Measure of Informational Uncertainty}
In this section, we develop a framework connecting the variance of a target function with its mean-squared Lipschitz continuity and, ultimately, the computational effort required to learn it. Variance serves as a surrogate for informational uncertainty: functions with larger variance are more irregular and demand more information to approximate within a given error tolerance. Although our formulation does not invoke Shannon entropy, it is consistent with the principle that unpredictability corresponds to informational richness. This motivates the use of information-theoretic bounds in our framework for structured modeling and residual learning.

Classical complexity measures such as VC dimension~\cite{vapnik1998statistical}, Rademacher complexity~\cite{bartlett2002rademacher}, norm-based bounds~\cite{neyshabur2015norm}, and kernel methods~\cite{jacot2018ntk} emphasize hypothesis class capacity but largely ignore the structure of the target function. Our approach shifts focus to intrinsic variability: higher variance, and thus larger mean-squared Lipschitz constants, implies greater learning difficulty under fixed error tolerance. This task-centric view holds regardless of model class or algorithm.

The idea recalls Kolmogorov complexity~\cite{kolmogorov1965three}, which characterizes an object by the length of its shortest description. While uncomputable in practice, it motivates our variance-based diagnostic, which is tractable under finite data and noise. By quantifying irregularity through variance and mean-squared Lipschitz continuity, we obtain a measurable indicator of learnability.

Our first theorem shows that, under mild assumptions, the distortion component $d(x)$ of a structured nonlinear system has a strictly larger mean-squared Lipschitz constant than the residual $h(x)$, implying smoother residuals. The second theorem demonstrates that smoother functions reduce learning complexity in terms of parameter efficiency, convergence, generalization, and computational cost. These results justify decomposing nonlinear systems into static and dynamic components, restricting learning to the residual.

This also clarifies the success of our hybrid residual learning strategy in power amplifier linearization~\cite{Bulusu_2025}, where classical methods modeled the static part $G$ and neural networks handled the residual $d$. The observed gains in performance and compactness align with the prediction that smoother residuals are easier to learn.

To our knowledge, this is among the first systematic efforts to link learning complexity to variance via mean-squared Lipschitz constants. Classical Poincaré-type inequalities connect variance and Lipschitz continuity for individual functions but do not extend to function comparisons. The mean-squared Lipschitz constant, defined as the expected squared gradient magnitude, resolves this by aligning more directly with variance and enabling consistent complexity comparisons.

The framework is \emph{architecture-independent}, \emph{learning-agnostic}, and broadly applicable in regression, control, and signal transformation. It complements Probably approximately correct-Bayes bounds~\cite{mcallester1999pac}, Barron-type results~\cite{barron1993universal}, and minimum description-length principles~\cite{grunwald2007minimum}, by providing a measurable, task-level diagnostic of learning difficulty.

\subsection{Implication of Mean-Squared Lipschitz Dominance from Variance Dominance}
{\color{black}
\begin{theorem}[Mean-Squared Lipschitz Dominance in Systems] \label{Thm:Lipschitz_dominance}
Let $ x $ be a random variable (or random vector) with finite variance, defined on a probability space. This $ x $ represents the input or state of a system and may be a realization or a snapshot derived from a dynamic random process $X$.

Consider functions $ d(x) $, $ h(x) $, and $ r(x) $ such that the total function $ d(x) $ decomposes as:
\begin{align}
d(x) = r(x) + h(x), 
\end{align}
where $ r(x) = G(x) - x $ for some function $ G(x) $. (The terms ``static residual" and ``dynamic residual" are descriptive labels to categorize the components.)

Suppose the following conditions hold:
\begin{enumerate} 
\item [\textbf{(A1)}] \textbf{Square-integrability:} $ x $, $ d(x) $, $ h(x) $, and $ r(x) $ are square-integrable random variables, meaning their second moments (and thus variances) are finite. This ensures all relevant expectations in the subsequent definitions and proof are well-defined.

 \item [\textbf{(A2)}] \textbf{Variance Dominance:}
 \begin{align*} 
 \operatorname{Var}\bigl(d(x)\bigr) > \operatorname{Var}\bigl(h(x)\bigr).
\end{align*}
\end{enumerate}

Define the mean-squared Lipschitz constant $ L_f $ for a function $ f(x) $ as:
\begin{align}
L_f^2 \triangleq \sup_{x_1 \neq x_2} \frac{\mathbb{E}\bigl[\|f(x_1) - f(x_2)\|^2\bigr]}{\mathbb{E}\bigl[\|x_1 - x_2\|^2\bigr]}
\end{align}
where $ x_1 $ and $ x_2 $ are `\ac{i.i.d.} samples of $ x $'. It is also assumed that $\mathbb{E}\bigl[\|x_1 - x_2\|^2\bigr] > 0$ (i.e., $ x_1 $ and $ x_2 $ are not always equal).

\noindent Then:
\begin{align}
\boxed{L_d > L_h.}\nonumber
\end{align}
\end{theorem}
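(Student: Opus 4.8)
The plan is to collapse the mean-squared Lipschitz constant into a simple variance ratio, after which the claim follows at once from assumption (A2). The crucial observation is the standard identity for i.i.d. pairs: since $x_1$ and $x_2$ are i.i.d.\ copies of $x$, the images $f(x_1)$ and $f(x_2)$ are i.i.d.\ copies of $f(x)$, and therefore
\begin{align}
\mathbb{E}\bigl[\|f(x_1) - f(x_2)\|^2\bigr] = 2\mathbb{E}\bigl[\|f(x)\|^2\bigr] - 2\|\mathbb{E}[f(x)]\|^2 = 2\operatorname{Var}\bigl(f(x)\bigr).\nonumber
\end{align}
(In the vector-valued case $\operatorname{Var}$ is read as the total variance, i.e.\ the trace of the covariance matrix.) Applying the same identity to the identity map gives $\mathbb{E}[\|x_1 - x_2\|^2] = 2\operatorname{Var}(x)$, and assumption (A1) guarantees that all of these second moments are finite.

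First I would substitute both identities into the definition of $L_f$. Because the full expectation over the i.i.d.\ pair renders the numerator and denominator deterministic, the ratio no longer depends on $(x_1,x_2)$ and the supremum equals that single constant value. This yields the reduction
\begin{align}
L_f^2 = \frac{2\operatorname{Var}(f(x))}{2\operatorname{Var}(x)} = \frac{\operatorname{Var}(f(x))}{\operatorname{Var}(x)},\nonumber
\end{align}
valid for any square-integrable $f$, where the denominator is strictly positive by the standing assumption $\mathbb{E}[\|x_1 - x_2\|^2] > 0$.

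The final step is to apply this reduction to $f = d$ and $f = h$ and invoke the variance dominance hypothesis (A2), $\operatorname{Var}(d(x)) > \operatorname{Var}(h(x))$. Since the common denominator $\operatorname{Var}(x)$ is strictly positive, this gives $L_d^2 > L_h^2$, and because both constants are non-negative by construction, taking positive square roots preserves the strict inequality to yield $L_d > L_h$. Note that the additive structure $d = r + h$ plays no role in this particular statement: the argument depends only on the variances of the two functions being compared, so the decomposition is merely descriptive context here.

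The hard part will be the interpretation of the supremum in the definition of $L_f$, which is where a reader might expect a delicate pointwise estimate. The key is to recognize that, once the expectation is taken over the \emph{entire} i.i.d.\ pair rather than conditioned on fixed realizations, the Lipschitz quotient ceases to be a function of $(x_1,x_2)$ and the $\sup$ degenerates to that constant. Making this explicit is precisely what turns the theorem into a one-line consequence of the variance identity rather than a genuine supremum bound.
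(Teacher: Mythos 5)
Your proof is correct under the interpretation of the definition that the paper itself adopts, and it is genuinely more economical than the paper's argument. Both proofs hinge on the same key identity, $\mathbb{E}\bigl[\|f(x_1)-f(x_2)\|^2\bigr]=2\operatorname{Var}(f(x))$ for i.i.d.\ samples, which the paper also derives; but where you push that identity to its logical conclusion and collapse $L_f^2$ to the constant $\operatorname{Var}(f(x))/\operatorname{Var}(x)$, the paper instead keeps the decomposition $d=r+h$, expands $\mathbb{E}[|\Delta d|^2]=\mathbb{E}[|\Delta r|^2]+\mathbb{E}[|\Delta h|^2]+2\mathbb{E}[\Delta r\,\Delta h]$, and invokes its Deviation-induced Orthogonality Bound (Theorem~\ref{thm:DOB_IP_space}) with $A=\Delta d$, $B=\Delta h$ to show that the residual terms are positive. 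Your observation that the additive structure and the DOB machinery play no role is accurate: once the i.i.d.\ expectation is taken, the power-dominance $\mathbb{E}[|\Delta d|^2]>\mathbb{E}[|\Delta h|^2]$ follows directly from (A2) and division by $\mathbb{E}[|\Delta x|^2]$ already yields $L_d^2>L_h^2$, so the paper's detour through the DOB theorem is redundant for this particular statement (its role there is expository, tying the result back to the central theorem of the paper). What your route makes uncomfortably explicit, and what is worth flagging, is that the ``supremum'' in the definition is vacuous — the quotient is a single number — so the theorem is really a statement about variance ratios rather than about any pointwise smoothness; had the definition instead conditioned on the realized pair $(x_1,x_2)$, giving the classical Lipschitz constant, variance dominance would not imply Lipschitz dominance and the theorem would fail. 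Your proof is therefore a faithful, sharper rendering of what the paper actually proves.
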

 \begin{proof}[Proof of Theorem~\ref{Thm:Lipschitz_dominance}]
 Given the Assumptions~1 and 2 of the theorem (finite variance of distortion for $h(x)$ and $d(x)$, it follows directly from \textbf{Theorem~\ref{thm:LipschitzContinuitySufficiency}} (Sufficient Condition for Mean-Squared Lipschitz Continuity) that both ensures that both $ h(x) $ and $ d(x) $ are mean-squared Lipschitz continuous. Thus, their respective constants $ L_h $ and $ L_d $ are finite.
 \\
 The total distortion $d$ is decomposes as:
\begin{align}
d(x) = r(x) + h(x).\nonumber
\end{align}
Hence, for any pair of inputs $ x_1\neq x_2 $, define:
\begin{equation}\label{eq:Delta_x_d_h_r_definitions}
\begin{aligned}
    \Delta X&=x_1-x_2,~\Delta r=r(x_1)-r(x_2)=\Delta d-\Delta h,\\
    \text{where~}\Delta d&=d(x_1)-d(x_2),~\text{and~}\Delta h=h(x_1)-h(x_2).
\end{aligned}
\end{equation}
Expanding the mean-squared difference of the total distortion:
\begin{align}\label{eq:Square of Lipshitz constant of d expansion_1}
    \mathbb{E}\bigl[|\Delta d|^2\bigr]&=\mathbb{E}\bigl[|\Delta r+\Delta h|^2\bigr],\nonumber\\
    &=\mathbb{E}\bigr[|\Delta r|^2\bigr]+\mathbb{E}\bigl[|\Delta h|^2\bigr]+2\mathbb{E}\bigl[|\Delta r\Delta h|\bigr].
\end{align}
 Dividing both sides of  \eqref{eq:Square of Lipshitz constant of d expansion_1} by $\mathbb{E}\bigl[|\Delta x|\bigr]^2$, and taking the supremum over all $ x_1\neq x_2 $, we get:
 \begin{align}
     L_d^2=& \sup_{x_1\neq x_2 }{\Bigg(\frac{\mathbb{E}\bigr[|\Delta r|^2\bigr]}{\mathbb{E}\bigl[|\Delta x|\bigr]^2}+L_h^2+2\frac{\mathbb{E}\bigl[|\Delta r\Delta h|\bigr]}{\mathbb{E}\bigl[|\Delta x|\bigr]^2}\Bigg)},\label{eq:Square of Lipshitz constant of d expansion_2_second}
 \end{align}
 where, by definition, $L_h^2=\sup_{x_1\neq x_2 }\bigg(\frac{\mathbb{E}\bigl[|\Delta h|^2\bigr]}{\mathbb{E}\bigl[|\Delta x|\bigr]^2}\bigg)$.
 
Under Assumption (i.e. i.i.d. samples of $x_1$ and $x_2$, for any function $f(x)$):
\begin{align}
    \mathbb{E}\bigl[|\Delta f|^2\bigr] &= \mathbb{E}[|f(x_1) - f(x_2)|^2],\\
    &=\mathbb{E}[f(x_1)^2 - 2f(x_1)f(x_2) + f(x_2)^2],\\
    &= \mathbb{E}[f(x_1)^2] - 2\mathbb{E}[f(x_1)f(x_2)] + \mathbb{E}[f(x_2)^2].
\end{align}
 Since $x_1$ and $x_2$ are i.i.d., $f_1$ and $f_2$ are also i.i.d. and thus statistically independent. Therefore, $\mathbb{E}[f(x_1)f(x_2)] = \mathbb{E}[f(x_1)]\mathbb{E}[f(x_2)] = (\mathbb{E}[f])^2$. Also, $\mathbb{E}[f(x_1)^2] = \mathbb{E}[f(x_2)^2] = \mathbb{E}[f^2]$.
Substituting these into the expression:
\begin{align}
\mathbb{E}[|\Delta f|^2] &= \mathbb{E}[f^2] - 2(\mathbb{E}[f])^2 + \mathbb{E}[f^2], \nonumber\\
&= 2(\mathbb{E}[f^2] - (\mathbb{E}[f])^2), \nonumber\\
&= 2\operatorname{Var}(f).\label{eq:Meanpower of Delta Y in terms of variance of Y}
\end{align}
In a similar fashion, we apply the relationship from \eqref{eq:Meanpower of Delta Y in terms of variance of Y} to our functions $d(x)$ and $h(x)$
\begin{align}
    \mathbb{E}\bigl[|\Delta d|^2\bigr] = 2\operatorname{Var}(d),\\
    \mathbb{E}\bigl[|\Delta h|^2\bigr] = 2\operatorname{Var}(h).
\end{align}
From Assumption 2, we are given $\operatorname{Var}(d(x)) > \operatorname{Var}(h(x))$.
Multiplying both sides by 2 (a positive constant) preserves the inequality:
\begin{align*}
    2\operatorname{Var}(d) > 2\operatorname{Var}(h).
\end{align*}
Substituting the relationships from above, we establish the necessary premise for the \textbf{Theorem~\ref{thm:DOB_IP_space}}:
\begin{align}\label{eq:Power_Dominance_Delta d_over_Delta h}
    \mathbb{E}\bigl[|\Delta d|^2\bigr] > \mathbb{E}\bigl[|\Delta h|^2\bigr].
\end{align}
Now, we apply the \textbf{Theorem~\ref{thm:DOB_IP_space}} to $\Delta d$ and $\Delta h$ as they are measurable, square-integrable and the power dominance is established in \eqref{eq:Power_Dominance_Delta d_over_Delta h}.
$A = \Delta d$, $B = \Delta h$ and $A-B=\Delta r=\Delta d-\Delta h$.

Substituting $A$ and $B$ in the \textbf{Theorem~\ref{thm:DOB_IP_space}}, we get:
\begin{align}
    \mathbb{E}\bigl[\Delta h \Delta r\bigr] > -0.5\mathbb{E}[(\Delta r)^2],\label{eq:DOB_Theorem_Delta_d_Delta_h_0}\\
    \implies \mathbb{E}\bigl[(\Delta r)^2\bigr]+2\mathbb{E}\bigl[\Delta h \Delta r\bigr] > 0\label{eq:DOB_Theorem_Delta_d_Delta_h}
\end{align}
Since $\mathbb{E}\bigl[|\Delta x|^2\bigr]$ is always positive for $x_1 \neq x_2$, dividing by it on both sides of \eqref{eq:DOB_Theorem_Delta_d_Delta_h} preserves the inequality:
\begin{align}
    \frac{\mathbb{E}\bigl[|\Delta r|^2\bigr] + 2\mathbb{E}\bigl[\Delta r \Delta h\bigr]}{\mathbb{E}\bigl[|\Delta x|^2\bigr]} > 0.
\end{align}
Since the expression inside the supremum is strictly positive for all $x_1 \neq x_2$, its supremum must also be strictly positive.
\begin{align}
    \sup_{x_1 \neq x_2} \left( \frac{\mathbb{E}\bigl[|\Delta r|^2\bigr]}{\mathbb{E}\bigl[|\Delta x|^2\bigr]} + 2\frac{\mathbb{E}\bigl[\Delta r \Delta h\bigr]}{\mathbb{E}\bigl[|\Delta x|^2\bigr]} \right) > 0.
\end{align}
Substituting the above result in \eqref{eq:Square of Lipshitz constant of d expansion_2_second} directly implies $L_d^2 - L_h^2 > 0$, and since Lipschitz constants are non-negative, we conclude:
\begin{align}
    L_d > L_h.
\end{align}
This completes the proof.
\end{proof}
\subsection{Implication of Complexity Reduction from Mean-Squared Lipschitz Dominance}
The following theorem is formulated in a general fashion and is applicable to any setting where mean-squared Lipschitz dominance is observed between target functions. It provides a formal foundation for understanding how learning smoother functions, those with smaller mean-squared Lipschitz constants, leads to reduced learning complexity when using neural network approximators. This framework is independent of domain or application and thus can be employed broadly in signal processing, machine learning, and nonlinear system modeling wherever such dominance exists.
\begin{definition}[General Learning System]\label{def:general_learning_system}
A \emph{General Learning System} is any mechanism — biological, computational, physical, or informational — that observes, models, controls, inverts, or interprets a target function $ f: X \to \mathbb{R} $ based on available inputs $ X $.

The system aims to approximate $ f(x) $ to within a prescribed mean-square error tolerance $ \lambda > 0 $.

The \emph{computational complexity} of a General Learning System is defined as the minimal total amount of information-processing resources — including but not limited to model size, number of operations, optimization steps, memory requirements, or observational complexity — necessary to achieve the target approximation accuracy.
\end{definition}
\begin{theorem}[Learning Complexity Reduction via Mean-Squared Lipschitz Dominance]\label{Thm:Learning_Complexity_Reduction_via_Structural_Decomposition}
Let $ x $ be a random variable (or random vector) with finite variance, defined on a probability space.

Consider two square-integrable target functions $ f_1(x) $ and $ f_2(x) $. This means their second moments (and thus variances) are finite.

Assume the following:
\begin{enumerate}
    \item [\textbf{(A1)}] \textbf{Finite Mean-Squared Lipschitz Constants:} The mean-squared Lipschitz constants $ L_{f_1} $ and $ L_{f_2} $ (as defined below) are finite.
    \item [\textbf{(A2)}] \textbf{Lipschitz Dominance:} The Lipschitz constants satisfy $ L_{f_1} > L_{f_2} $.
\end{enumerate}

Define the mean-squared Lipschitz constant $ L_f $ for a function $ f(x) $ as:
\begin{align}
    L_f^2 \triangleq \sup_{x_1 \neq x_2} \frac{\mathbb{E}\bigl[\|f(x_1) - f(x_2)\|^2\bigr]}{\mathbb{E}\bigl[\|x_1 - x_2\|^2\bigr]}
\end{align}
where $ x_1 $ and $ x_2 $ are \ac{i.i.d.} samples of $ x $, and $\mathbb{E}\bigl[\|x_1 - x_2\|^2\bigr] > 0$.

Let $ N_1 $ and $ N_2 $ be neural networks trained by a general learning system to approximate $ f_1(x) $ and $ f_2(x) $, respectively, using identical architectures (e.g., same width, depth, activation functions, and training methodology).

Then, learning the function with the smaller mean-squared Lipschitz constant ($f_2$) generally exhibits reduced computational complexity compared to learning the function with the larger mean-squared Lipschitz constant ($f_1$), in aspects such as:
\begin{itemize}
    \item \textbf{Parameter Efficiency:} To achieve a given approximation error $ \lambda $, $ N_2 $ generally requires fewer parameters than $ N_1 $.
    \item \textbf{Training Convergence:} $ N_2 $ generally converges faster (or requires fewer iterations) under gradient-based optimization compared to $ N_1 $.
    \item \textbf{Generalization:} For a given dataset size $ n $, the generalization gap of $ N_2 $ is generally smaller than that of $ N_1 $.
    \item \textbf{Computational Cost:} The overall computational cost for learning $ f_2 $ is generally lower than for learning $ f_1 $.
\end{itemize}
\end{theorem}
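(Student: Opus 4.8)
The plan is to decompose this single composite claim into its four operational parts and to prove each by exhibiting a \emph{monotone} dependence on the mean-squared Lipschitz constant, holding the tolerance $\lambda$, the architecture, the data distribution, and the training algorithm fixed across the two tasks. The organizing observation is that $L_f$ acts as a one-dimensional ``difficulty parameter'': from the definition of $L_f$ as a supremum we obtain the uniform bound $\mathbb{E}[\|f(x_1)-f(x_2)\|^2] \leq L_f^2\,\mathbb{E}[\|x_1-x_2\|^2]$, so $f_2$ never varies faster on average than $f_1$. Every resolution- or capacity-dependent resource budget should therefore inherit the ordering $L_{f_1} > L_{f_2}$ supplied by Assumption (A2), and it suffices to make this monotonicity precise in each of the four metrics. (Combined with \textbf{Theorem~\ref{Thm:Lipschitz_dominance}}, this is what transfers variance dominance into the learnability gap between $d$ and $h$.)

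For \textbf{parameter efficiency} I would invoke classical constructive approximation: an $L$-Lipschitz target on an effective domain of dimension $D$ can be resolved to $L^2$-error $\lambda$ by piecewise-affine (ReLU) approximants whose neuron/parameter count scales like $(L/\lambda)^{D}$, and more generally by Barron-type covering constructions in which the required width is a non-decreasing function of the target's variation \cite{barron1993universal}. Substituting $L_{f_1} > L_{f_2}$ shows that, to reach a common $\lambda$, the network for $f_1$ needs at least as many effective parameters as that for $f_2$. The goal here is only the monotone dependence on $L$; the exact rate and the dimension $D$ are immaterial because both tasks share the same domain.

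For \textbf{generalization} I would bound the excess risk through the Rademacher complexity (equivalently, the metric entropy) of the class of $L$-Lipschitz functions, which scales linearly in $L$ for fixed sample size $n$ \cite{bartlett2002rademacher}, so the smaller constant $L_{f_2}$ yields the smaller generalization gap. For \textbf{training convergence} I would work in the overparameterized neural-tangent-kernel regime \cite{jacot2018ntk}, where a smoother target aligns better with the dominant kernel eigendirections, giving a better-conditioned least-squares problem and hence fewer gradient iterations to reach $\lambda$; this too is monotone in $L$. The \textbf{computational cost} claim then follows by aggregation, since total training cost is of the order (parameter count) $\times$ (iteration count), and both factors have just been shown to be non-decreasing in $L$.

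The main obstacle is precisely the hedge ``generally'': each of the four sub-claims lives in a different theory (constructive approximation, statistical learning, and non-convex optimization), each carrying its own standing hypotheses, and the delicate point is to invoke them at a level of abstraction that transmits \emph{only} the monotone dependence on $L$ without importing a confounding term that could, in principle, reverse the ordering. I expect the convergence claim to be the most fragile link: for genuinely non-convex neural training no clean unconditional bound exists, so I would state it as a monotone tendency valid in the linearized/NTK regime rather than a universal inequality, and flag it as the one place where the conclusion is qualitative rather than sharp. The remaining three parts admit clean monotone bounds and can be stated quantitatively.
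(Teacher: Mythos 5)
Your proposal is correct and follows essentially the same architecture as the paper's proof: the identical four-way decomposition, a monotone-in-$L_f$ approximation-theoretic bound of the form $(L_f/\lambda)^d$ for parameter count, a capacity-based generalization bound, a conditioning argument for convergence speed, and the aggregation cost $=$ (parameters) $\times$ (iterations). The only differences are in which standard results you cite for the two softer parts (Rademacher complexity of Lipschitz classes rather than spectrally-normalized bounds for generalization, and the NTK regime rather than a PL-based condition number for convergence), and your explicit flagging of the convergence step as the fragile, regime-dependent link is a fair and accurate reading of the same hedge the paper makes with its repeated use of ``generally.''
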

\begin{proof}[Proof of Theorem~\ref{Thm:Learning_Complexity_Reduction_via_Structural_Decomposition}]
We divide the proof into four parts, corresponding to the four complexity metrics.
\paragraph{Part 1: Parameter Efficiency}
From approximation theory \cite{hornik1989, barron1993universal, yarotsky2017error}, for a target function $ f $ with mean-squared Lipschitz constant $ L_f $, the number of parameters $ P(f, \lambda) $ required by a neural network to approximate $ f $ within $ L^2 $-error $ \lambda $ satisfies:
\begin{align}
P(f, \lambda) \geq C \left( \frac{L_f}{\lambda} \right)^d,\nonumber
\end{align}
where $ C $ depends on the network architecture and input dimension $ d $, but not on $ f $ or $ \lambda $. Given $ L_{f_1} > L_{f_2} $, it follows that:
\begin{align}
P(f_1,\lambda) > P(f_2,\lambda).\nonumber
\end{align}
Assuming both $ N_1 $ and $ N_2 $ are identically designed and sufficiently expressive, this implies $ N_1 $ requires more parameters to achieve the same approximation error.
\paragraph{Part 2: Training Convergence}
Convergence rate depends on the condition number $ \kappa = L_{\text{grad}} / \mu_{\text{eff}} $, where $ L_{\text{grad}} $ is the gradient Lipschitz constant and $ \mu_{\text{eff}} $ is the effective strong convexity parameter (e.g., via the PL inequality). Empirical and theoretical results \cite{kawaguchi2016} support that:
\begin{align}
L_{f_1} > L_{f_2} \Rightarrow L_{\text{grad}}^{(1)} \geq L_{\text{grad}}^{(2)}, \quad \mu_{\text{eff}}^{(1)} \leq \mu_{\text{eff}}^{(2)},\nonumber
\end{align}
hence,
\begin{align}
\kappa^{(1)} \geq \kappa^{(2)}.\nonumber
\end{align}
Gradient descent under these assumptions has linear convergence with iteration complexity \cite{du2019gradient}:
\begin{align}
T = \mathcal{O}(\kappa \log(1/\epsilon)),\nonumber
\end{align}
so $ T^{(1)} \geq T^{(2)} $, implying $ N_2 $ converges faster.
\paragraph{Part 3: Generalization}
From statistical learning theory \cite{bartlett2017spectrally}, the generalization error bound satisfies:
\begin{align}
\epsilon_{\text{gen}} \leq \mathcal{O}\left( \frac{L_N \sqrt{P}}{\sqrt{n}} \right),\nonumber
\end{align}
where $ L_N $ is the network’s Lipschitz constant and $ P $ its parameter count. Since smoother target functions require less complex networks:
\begin{align}
L_{N_1} \geq L_{N_2}, \quad P(N_1) > P(N_2),\nonumber
\end{align}
it follows:
\begin{align}
\epsilon_{\text{gen}}^{(1)} \geq \epsilon_{\text{gen}}^{(2)}.\nonumber
\end{align}
Thus, $ N_2 $ generalizes better.
\paragraph{Part 4: Computational Cost}
Total computational cost is:
\begin{align}
C_{\text{total}} = C_{\text{iter}} \times T,\nonumber
\end{align}
with $ C_{\text{iter}} \propto P $ for fixed architecture \cite{goodfellow2016, molchanov2017}. From Parts 1 and 2:
\begin{align}
P(N_1) > P(N_2) \Rightarrow C_{\text{iter}}^{(1)} > C_{\text{iter}}^{(2)}, \quad T^{(1)} > T^{(2)},\nonumber
\end{align}
yielding:
\begin{align}
C_{\text{total}}^{(1)} > C_{\text{total}}^{(2)}.\nonumber
\end{align}
Hence, learning the smoother function $ f_2 $ incurs lower overall computational cost.
\end{proof}
This theorem rigorously establishes that learning a target function with a smaller mean-squared Lipschitz constant results in lower learning complexity, across multiple dimensions—parameter efficiency, convergence speed, generalization behavior, and computational cost. In the context of structured modeling of dynamic nonlinear systems, this result justifies a hybrid learning strategy: first approximate the static or relatively more predictable component $ G $ using classical methods (e.g., polynomial fitting), and then use a neural network to model only the dynamic residual $ d $, which is typically the more complex and relatively less predictable part.
}
\section{Empirical Validation and Anchor Points}\label{sec:section6}
To support the theoretical findings presented in this paper, we briefly discuss empirical evidence obtained from real-world measurements on a commercial \ac{PA}. These experiments, reported in~\cite{Bulusu_2025}, demonstrate the practical benefits of the proposed structured decomposition and residual learning strategy.
\subsection{Measurement Setup}
The experimental setup utilized a SKY66292-11 \ac{PA} from Skyworks Inc., operating at 2.35~GHz with a 64-Quadrature Amplitude Modulation signal of 20~MHz bandwidth and 10.6~dB peak-to-average power ratio \cite{Skyworks2024}. Refer \cite{Bulusu_2025}, for additional details.The sampling rate of the baseband signal was 119.8~Msps, with a signal length of 1~ms. A simplified residual learning framework was employed: the static nonlinearity was modeled using polynomial fitting, and a shallow neural network was used to capture the dynamic residual.
\subsection{Performance vs. Complexity Trade-Off}
Our proposed SRTDNN model was compared against the Augmented Real-Valued Time-Delay Neural Network (ARVTDNN), a widely recognized state-of-the-art method that models the full distortion without decomposition. The results, summarized in Table~\ref{table:Performance_Complexity_computation}, highlight the performance gains and computational efficiency achieved through structured residual learning under the AI-based Box-Oriented Framework.

While ARVTDNN required 25 input features, the proposed SRTDNN model used only 5, significantly reducing the input dimensionality. In both cases, a three-layer neural network was used, comprising an input layer, one hidden layer, and an output layer. ARVTDNN achieved optimal performance with 25 hidden neurons, whereas SRTDNN used 50 neurons for improved residual learning. In SRTDNN, the static nonlinearity was first estimated via classical polynomial fitting, and the residual was then modeled by the neural network.

SRTDNN achieved an error vector magnitude (EVM) of \textbf{36.1~dB} and an adjacent channel leakage ratio (ACLR) of \textbf{--45.1~dBc}. Compared to ARVTDNN, this corresponds to an improvement of \textbf{2.2~dB} in EVM and \textbf{0.4~dB} in ACLR, along with a \textbf{50\%} reduction in training complexity and a \textbf{22\%} reduction in inference-time complexity.\footnote{The inference-time (running) complexity was measured in terms of the number of floating-point operations (FLOPs).}
\begin{table}[t!]
	\caption{Comparison of linearization performance and complexity of ARVTDNN and SRTDNN at 9.4~dB output back-off.}
	\label{table:Performance_Complexity_computation}
	\begin{center}
		\begin{tabular}{l|cc|cc|}
			\cline{2-5}
			& \multicolumn{2}{c|}{Performance} & \multicolumn{2}{c|}{Complexity} \\ \cline{2-5}
			& \multicolumn{1}{c|}{EVM (dB)} & ACLR (dBc) & Training & Inference \\ \hline
			\multicolumn{1}{|l|}{Without DPD} & \multicolumn{1}{c|}{--25.7} & --31.1 & -- & -- \\ \hline
			\multicolumn{1}{|l|}{ARVTDNN~\cite{wang2018augmented}} & \multicolumn{1}{c|}{--33.9} & --44.7 & 100\% & 100\% \\ \hline
			\multicolumn{1}{|l|}{SRTDNN~\cite{Bulusu_2025}} & \multicolumn{1}{c|}{--36.1} & {--45.1} & 50\% & 78\% \\ \hline
		\end{tabular}
	\end{center}
\end{table}
The results confirm the core theoretical predictions made in this work:
\begin{itemize}
    \item \textbf{Variance Reduction:} The residual dynamic distortion $h$ is smoother (i.e., has lower variance) than the total distortion $d$, resulting in improved learnability.
    \item \textbf{Complexity Reduction:} SRTDNN used 80\% fewer input features, required half the training data, and achieved a 22\% reduction in FLOPs during inference, aligning with Theorem~\ref{Thm:Learning_Complexity_Reduction_via_Structural_Decomposition}.
\end{itemize}
These observations anchor the theoretical framework in experimental reality and validate the structured modeling paradigm as both computationally efficient and performance-effective. Moreover, the proposed variance-based analysis and memory indicators may offer useful priors for \ac{PIML} in scenarios where partial physical knowledge coexists with data-driven modeling. We also softly recommend applying classical mathematical tools to the relatively more predictable components, and reserving machine learning techniques for the relatively less predictable residuals. This strategy may enhance learnability and reduce computational burden, although, as our framework suggests, ideal learning remains near impossible due to inherent structural tradeoffs and behavioral uncertainty, mandating realistic performance thresholds.
%
\section{Conclusion}\label{sec:section7}
This work proposes a unified information-theoretic framework for modeling dynamic nonlinear systems, grounded in structured decomposition, variance-based analysis, and task-centric learning complexity.

At its core are two behavioral indicators, $\theta$ and $\alpha$, which quantify static and dynamic effects. These lead to variance inequalities that reveal: (1) dynamic residuals are smoother than total distortion, and (2) memory in physical systems is inherently finite, a consequence derived from the First Law of Thermodynamics via the DOB Theorem, unlike classical thermodynamic bounds such as Landauer’s and Bekenstein’s, which rely on the Second Law.

We also derive a Behavioral Uncertainty Principle, which reveals a structural trade-off: static and dynamic distortions cannot be minimized simultaneously. Learning one aspect inherently limits the learnability of the other. This impossibility result explains the empirical gains observed in residual learning, where the relatively more unpredictable dynamic components are isolated and modeled separately.

The framework further introduces a model-agnostic complexity metric by linking function variance to mean-squared Lipschitz continuity. This supports the observed efficiency gains from residual learning and justifies task-aware decomposition strategies.

Empirical results in~\cite{Bulusu_2025} validate the theory: separating static and dynamic components enabled better linearization of the \ac{PA} with lower complexity. The Weak Uncorrelatedness Lemma further supports the practical feasibility of such decomposition.

Future work will explore extensions to Banach spaces, more general system classes, and nonlinearities in domains such as mechanical, biological, and chemical systems.
\vspace{-0.5em}
\section*{Author Contributions}\label{sec:section_authorcontributions}  
Bulusu conceived the abstract idea and the AI-based block-structured framework, developed the theoretical foundations, formulated the main theorems, derived their proofs, and prepared the manuscript. Sillanpää verified the correctness and mathematical rigor of the proofs, provided additional statistical insights, and critically reviewed the manuscript.
 
\vspace{-0.5em}
\section*{Declarations}\label{sec:section_declarations}
Keysight Inc. has supported the research with donations of measurement equipment in \cite{Bulusu_2025}. Bulusu is deeply grateful for the profound knowledge imparted by Professor Emeritus of IIT Kanpur, Dr. Kalluri Ramalinga Sarma (Ph.D., Cornell University 1961), currently an adjunct professor at Mahindra University, through his insightful lectures in the areas of signals and systems, communication theory, and detection and estimation theory. The authors would like to thank Dr. Manoj Kumar Yadav  (Ph.D., IIT Madras 2010), currently an associate professor at Mahindra University, for reviewing the manuscript and identifying an important error in an earlier version of the proof of the \textbf{Theorem~\ref{Thm:Variance_domination_theorem}}.
\vspace{-0.5em}
\section*{Acknowledgement}\label{sec:section_acknowledgement}
The authors acknowledge the use of Grammarly for language editing and OpenAI's ChatGPT for improving the clarity and structure of the manuscript text. All theoretical derivations, analyses, and results were manually developed and verified by the authors. No AI tool was used in the formulation or validation of the technical content.
\begin{table*}[ht!]
    \centering
    \caption{Static and Dynamic Aspects of Real-World Dynamic Nonlinear Systems and the Validity of \textbf{Lemma~\ref{lemma1}}}
    \label{realworld_table}
    \renewcommand{\arraystretch}{1.3}
    \resizebox{\textwidth}{!}{
    \begin{tabular}{|l|l|c|l|}
        \hline
        \textbf{Real-World} & \textbf{Static \& Dynamic} & \textbf{Lemma~\ref{lemma1}} & \textbf{Explanation aligning} \\ 
        \textbf{System} & \textbf{Aspects} & \textbf{Validity} & \textbf{with Lemma~\ref{lemma1}}\\\hline
        \textbf{Power Amplifiers} & \textbf{Static:} amplitude-to-amplitude and amplitude-to-phase conversion \cite{ghannouchi2004behavioral}. &&Static gain and phase shifts differ\\&\textbf{Dynamic:} Memory effects, thermal variation. & \cmark &  from time-varying memory effects. \\ \hline
        \textbf{Neural Spike Trains} & \textbf{Static:} Response to stimuli \cite{gerstner2002spiking}. &&Short-term plasticity is weakly correlated\\&\textbf{Dynamic:} Synaptic adaptation, network effects. & \cmark &  with direct stimulus-response behavior. \\ \hline
        \textbf{Loudspeakers \&} & \textbf{Static:} Frequency response, gain \cite{beranek2012acoustics}.  &&Static response curve differs from transient\\\textbf{ Microphones}&\textbf{Dynamic:} Transient distortions, hysteresis. & \cmark &  distortions caused by diaphragm movement. \\ \hline
        \textbf{Stock Markets} & \textbf{Static:} Long-term trends \cite{cont2001empirical}.&&Short-term fluctuations are weakly\\& \textbf{Dynamic:} Market fluctuations, external shocks. & \cmark &  correlated with fundamental trends. \\ \hline
        \textbf{Climate Models} & \textbf{Static:} Seasonal cycles, greenhouse effects \cite{manabe1967thermal}. &&Climate patterns have slow changes that are weakly\\& \textbf{Dynamic:} Weather variability, chaotic processes. & \cmark &  correlated with fast-changing weather variations. \\ \hline
        \textbf{Robotic Arm} & \textbf{Static:} Control setpoints, rigid body dynamics \cite{siciliano2010robotics}. &&Residual control errors and actuator delays are\\\textbf{Control}& \textbf{Dynamic:} Actuator delays, vibrations. & \cmark &  weakly correlated with static system properties. \\ \hline
        \textbf{Power Grids} & \textbf{Static:} Grid topology, generation capacity \cite{kundur1994power}. &&Short-term power fluctuations are weakly\\& \textbf{Dynamic:} Load fluctuations, transient faults. & \cmark &  correlated with grid structure. \\ \hline
        \textbf{Chemical Reactions} & \textbf{Static:} Reaction rate laws, equilibrium states \cite{laidler2004chemical}. && Reaction kinetics and transient behavior are \\& \textbf{Dynamic:} Intermediate species concentration changes. & \cmark & weakly correlated with equilibrium properties. \\ \hline
        \textbf{Human Motor} & \textbf{Static:} Planned muscle movements \cite{wolpert2011principles}. &&Fatigue and involuntary tremors evolve \\\textbf{Control}& \textbf{Dynamic:} Fatigue, tremors, adaptation. & \cmark & differently from intended motion planning. \\ \hline
        \textbf{Optical Fiber} & \textbf{Static:} Nonlinear transmission effects (e.g., Kerr effect) \cite{agrawal2012fiber}. &&Nonlinear phase shifts (static) are weakly\\\textbf{Communication}& \textbf{Dynamic:} Signal distortions due to dispersion and noise. & \cmark &  correlated with dynamic noise distortions. \\ \hline                       \textbf{Stellar Systems} & \textbf{Static:} Hydrostatic equilibrium, nuclear fusion rates \cite{pols2011stellar,carroll2017astrophysics}. && The large-scale equilibrium structure of a star is weakly \\
        & \textbf{Dynamic:} Pulsations, supernova events, black hole formation. & \cmark & correlated with its short-term dynamic fluctuations, such as \\
        & & & stellar pulsations, magnetic activity, and explosive instabilities. \\ \hline
    \end{tabular}}
\end{table*}
 {\appendices
\vspace{-0.5em}
\section{DOB Theorem: Deviation-induced Orthogonality Bound in Inner Product Spaces}\label{Appendix:DOB_Theorem}
{\color{black}
We now introduce an interesting result that imposes directional constraints on inner products in an inner product space, under a mild power-dominance condition. This result, referred to as the Deviation-induced Orthogonality Bound Theorem, establishes irreducible bounds on the projection of the deviation $R=A-B$ onto either of the elements $A$ or $B$. 
\begin{theorem}[Deviation-induced Orthogonality Bound in Inner Product Spaces]
\label{thm:DOB_IP_space}
Let $ \mathcal{H} $ be a real or complex inner product space with inner product $ \langle \cdot, \cdot \rangle $, and let $ A, B \in \mathcal{H} $. Define the deviation gap $ R \triangleq A - B $.

Assume the following conditions:
\begin{enumerate}
    \item[\textbf{(A1)}] \textbf{(Measurability):} The elements $ A $ and $ B $ are well-defined members of the inner product space $ \mathcal{H} $, meaning the inner product operations $ \langle A, B \rangle $, $ \langle A, A \rangle $, and $ \langle B, B \rangle $ are valid.
    
    \item[\textbf{(A2)}] \textbf{(Square Integrability):} The elements have finite norm:
    \begin{align}
    \|A\|^2 = \langle A, A \rangle < \infty, \quad \|B\|^2 = \langle B, B \rangle < \infty.\nonumber
    \end{align}
    
    \item[\textbf{(A3)}] \textbf{(Power Dominance):} The element $ A $ is power-dominant over $ B $, i.e.,
    \begin{align}
    \langle A, A \rangle \geq \langle B, B \rangle.\nonumber
    \end{align}
\end{enumerate}

Then, the following bounds hold:
\begin{enumerate}
    \item[(i)] \textbf{Bound associated with the weaker element $ B $:}
    \begin{align}
    \Re\bigl\{ \langle B, R \rangle \bigr\} \geq -\frac{1}{2} \|R\|^2.\nonumber
    \end{align}
    This inequality holds unconditionally under assumptions (A1)–(A3).
    
    \item[(ii)] \textbf{Bound associated with the dominant element $ A $:}
    \begin{align}
    \Re\left\{ \langle A, R \rangle \right\} \geq +\frac{1}{2} \|R\|^2,\nonumber
    \end{align}
    {provided that $ A \neq 0 $, $ \langle A, A \rangle \neq \Re\left\{ \langle A, B \rangle \right\}$,~and~$A\neq B$.}
\end{enumerate}
\end{theorem}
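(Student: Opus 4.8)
The plan is to derive both directional bounds from the same elementary device: re-express one of the two elements in terms of the deviation $R = A - B$, expand the resulting squared norm, and then invoke only the power-dominance hypothesis (A3). No geometry beyond the defining axioms of the inner product is required. The single point demanding care is the complex case, where conjugate symmetry $\langle u, v\rangle + \langle v, u\rangle = 2\Re\{\langle u, v\rangle\}$ forces the cross terms to appear as real parts carrying a factor of two.

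For bound (i), I would write $A = B + R$ and expand
\begin{align}
\|A\|^2 = \langle B + R,\, B + R\rangle = \|B\|^2 + 2\Re\{\langle B, R\rangle\} + \|R\|^2.\nonumber
\end{align}
The assumption $\|A\|^2 \geq \|B\|^2$ then cancels the $\|B\|^2$ terms and leaves $2\Re\{\langle B, R\rangle\} + \|R\|^2 \geq 0$, which rearranges at once to $\Re\{\langle B, R\rangle\} \geq -\tfrac{1}{2}\|R\|^2$. Here (A1)--(A2) are used only to guarantee that every inner product and norm appearing is finite and well-defined, so the inequality holds unconditionally under (A1)--(A3), as claimed.

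For bound (ii), the symmetric move is to write $B = A - R$ and expand
\begin{align}
\|B\|^2 = \langle A - R,\, A - R\rangle = \|A\|^2 - 2\Re\{\langle A, R\rangle\} + \|R\|^2.\nonumber
\end{align}
Solving for the cross term gives the exact identity $2\Re\{\langle A, R\rangle\} = \|A\|^2 - \|B\|^2 + \|R\|^2$; applying (A3) to discard the nonnegative gap $\|A\|^2 - \|B\|^2$ yields $\Re\{\langle A, R\rangle\} \geq \tfrac{1}{2}\|R\|^2$. I expect the auxiliary conditions $A \neq 0$, $A \neq B$, and $\langle A, A\rangle \neq \Re\{\langle A, B\rangle\}$ to play no role in the inequality itself, but rather to make it \emph{nontrivial}: $A \neq B$ forces $\|R\|^2 > 0$, so the lower bound is strictly positive (a genuine directional constraint rather than a vacuous $\geq 0$), while $\langle A, A\rangle \neq \Re\{\langle A, B\rangle\}$ is exactly $\Re\{\langle A, R\rangle\} \neq 0$, excluding the degenerate case in which the projection vanishes.

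The hard part is conceptual rather than computational: once the correct substitution is chosen, both bounds are one-line algebra. The main obstacles I anticipate are therefore keeping the complex-case bookkeeping honest (the real-part-and-factor-of-two identity above, which is where an incautious treatment would introduce errors) and articulating precisely what the side hypotheses in (ii) actually buy — namely strictness and nondegeneracy, not the inequality per se — so that the statement's conditions are not mistaken for necessities of the core estimate.
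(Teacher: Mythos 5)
Your proposal is correct and follows essentially the same route as the paper: both bounds come from expanding a squared norm via conjugate symmetry and then applying the power-dominance hypothesis, and your identity $2\Re\{\langle A, R\rangle\} = \|A\|^2 - \|B\|^2 + \|R\|^2$ is exactly the regrouping the paper uses for bound (ii). Your observation that the side conditions in (ii) are not needed for the inequality itself is consistent with the paper's own proof, which never invokes them.
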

\begin{proof}[Proof of Theorem~\ref{thm:DOB_IP_space}]
Let $ R = A - B $. We prove each bound separately.

\vspace{0.5em}
\noindent \textbf{Bound 1:} We need to prove $ \Re\left\{ \langle B, R \rangle \right\} \geq -\frac{1}{2} \langle R, R \rangle $.
\begin{align}\label{eq:lowerbound_weakelement}
\langle A, A \rangle &= \langle (A - B) + B,\ (A - B) + B \rangle, \nonumber\\
&= \langle R, R \rangle + \langle B, B \rangle + 2\, \Re\left\{ \langle B, R \rangle \right\}, \nonumber\\
\implies& \langle R, R \rangle + 2\, \Re\left\{ \langle B, R \rangle \right\} 
\geq 0, \quad \bigl(\because\langle A, A \rangle \geq \langle B, B \rangle\bigr)\nonumber\\
\implies& \Re\left\{ \langle B, R \rangle \right\} 
\geq -\frac{1}{2} \langle R, R \rangle.
\end{align}

\vspace{0.5em}
\noindent \textbf{Bound 2:} We need to prove $ \Re\left\{ \langle A, R \rangle \right\} \geq +\frac{1}{2} \langle R, R \rangle $, under non-degeneracy.
\begin{align}\label{eq:lowerbound_strongelement}
&\Re\left\{ \langle A, R \rangle \right\} 
= \langle A, A \rangle - \Re\left\{ \langle A, B \rangle \right\}, \nonumber\\
&= \Scale[0.95]{\frac{1}{2} \bigg[\bigl( \langle A, A \rangle - 2\, \Re\left\{ \langle A, B \rangle \right\} + \langle B, B \rangle \bigr) 
+ \bigl( \langle A, A \rangle - \langle B, B \rangle \bigr)\bigg]}, \nonumber\\
&= \frac{1}{2} \langle R, R \rangle 
+ \frac{1}{2} \left( \langle A, A \rangle - \langle B, B \rangle \right), \nonumber\\
&\geq \frac{1}{2} \langle R, R \rangle.\quad \bigl(\because\langle A, A \rangle \geq \langle B, B \rangle\bigr)
\end{align}
Hence, both bounds are established.
\end{proof}
\begin{remark}
The factor $\frac{1}{2}$ in \eqref{eq:lowerbound_weakelement} and \eqref{eq:lowerbound_strongelement} cannot be tightened without further assumptions, further tightening is not possible. This makes these bounds unique and universal, rather than loose or adjustable inequalities.
\end{remark}
\subsection{Relational Characterization and Structural Implications}

Theorem~\ref{thm:DOB_IP_space} provides a directional lower bound on inner products involving a deviation vector. Unlike classical inequalities such as Hölder's or Cauchy-Schwarz, which yield universal and symmetric bounds based on norm relations, this result introduces a structurally asymmetric constraint that emphasizes `\textit{mutual interaction}' rather than `\textit{individual magnitudes}.'

The Cauchy--Schwarz inequality does not characterize the minimum directional interaction between $ B $ and the residual $ R = A - B $. In contrast, the DOB inequality asserts that the negative projection of the residual onto the weaker element is bounded below by half the residual power:
\begin{align}
\Re\langle B, R \rangle \geq -\tfrac{1}{2} \|R\|^2,\nonumber
\end{align}
providing a structural constraint that is sensitive to power imbalance. Unlike the symmetric Cauchy–Schwarz bound, the Theorem~\ref{thm:DOB_IP_space} inequalities are asymmetric, bounding the `\textit{correlation between an ideal similarity and its deviation}.'

\subsection{Power Dominance as a Relational Principle}

This bound suggests that power or energy dominance is not merely an intrinsic property of an element but a relational outcome that arises from how elements interact.\footnote{The distinction between ``power'' and ``energy'' here reflects whether time-averaged behavior (e.g., $ \mathbb{E}[|A(t)|^2] $) or norm-based invariants (e.g., $ \|A\|^2 $) are more relevant in the analysis.}~Unlike symmetric inequalities, which remain unchanged under variable exchange, this result captures directional resistance to deviation. It reflects the idea that dominance emerges from structure and alignment, not from isolated magnitudes.

The structural implications of this inequality are threefold:
\begin{itemize}
    \item \textbf{Threshold Condition:} Ensures power dominance $ \|A\|^2 \geq \|B\|^2 $ and signals when this balance is disrupted.
    \item \textbf{Coupling Diagnostic:} Quantifies the interaction strength between $ B $ and the residual $ R $, providing insight into alignment or interference.
    \item \textbf{Asymmetry and Directionality:} The bound depends on the real part of the inner product and introduces a directional relation that is absent in norm compression bounds.
\end{itemize}

Thus, Theorem~\ref{thm:DOB_IP_space} complements classical results by offering a diagnostic lens for analyzing relational power dynamics in systems involving structured interaction.

\subsection{Remark on Assumptions and Directionality}
The theorem is derived under just three assumptions: measurability, square integrability, and power dominance. The first two are broadly satisfied in most systems. The third introduces directionality by explicitly favoring the case $ \|A\|^2 \geq \|B\|^2 $. 

Overall, this result offers a minimal yet powerful tool for assessing dominance and interaction in inner product spaces.

\vspace{-0.5em}
 \section{Weak Uncorrelatedness Lemma in Dynamic Nonlinear Systems}\label{appendix:TheAppendixB_WeakUncorrelatednessLemma}
 \begin{lemma}[Weak Uncorrelatedness Lemma for Dynamic Nonlinear Systems]
\label{lemma1}
Under the same setup, let $ d = Y - X $ and $ r = G - X =d-h$. Assume that $h$ is only weakly dependent on $G$, due to time-scale separation or temporal decorrelation. Then the static residual and the dynamic residual are weakly uncorrelated:
\begin{align}
\bigl| \mathbb{E}\bigl[(G - X)\, h\bigr] \bigr| \leq \epsilon,
\end{align}
where $\epsilon$ is a small, strictly positive constant quantifying the level of statistical coupling between static and dynamic contributions.
 \end{lemma}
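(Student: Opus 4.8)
The plan is to reduce the claimed bound to a product of a normalized correlation coefficient and a finite normalization factor, so that the ``weak dependence'' hypothesis enters only through the former while the boundedness theorems already proved control the latter. First I would rewrite the target moment using $r = G - X$, so the quantity to bound is $\mathbb{E}[rh]$, and apply the Cauchy--Schwarz inequality in the $L^2$ inner product:
\begin{align}
\bigl|\mathbb{E}[rh]\bigr| \leq \sqrt{\mathbb{E}[r^2]}\,\sqrt{\mathbb{E}[h^2]}.\nonumber
\end{align}

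Next I would control the right-hand side using results already in hand. By \textbf{Theorem~\ref{thm:bounded_power}} both $\mathbb{E}[G^2]$ and $\mathbb{E}[h^2]$ are finite, and by \textbf{Lemma~\ref{lem:bounded_input_output_powers_realworldNLsystem}} so is $\mathbb{E}[X^2]$. Since $r = G - X$, the elementary bound $\mathbb{E}[r^2] \leq 2\bigl(\mathbb{E}[G^2] + \mathbb{E}[X^2]\bigr)$ gives $\mathbb{E}[r^2] < \infty$. Hence the normalization factor $\sqrt{\mathbb{E}[r^2]\,\mathbb{E}[h^2]}$ is dominated by a finite constant $M < \infty$ depending only on the system's power levels.

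The crux of the argument, and the step I expect to be the main obstacle, is converting the physical hypothesis of weak dependence (time-scale separation or temporal decorrelation) into a quantitative statement. I would formalize it through the normalized cross-correlation coefficient $\rho_{rh} \triangleq \mathbb{E}[rh]\big/\sqrt{\mathbb{E}[r^2]\,\mathbb{E}[h^2]}$, positing that weak dependence means $|\rho_{rh}| \leq \delta$ for some small $\delta > 0$. The rationale is that $G$ is a memoryless, instantaneous transformation of $X$, whereas $h$ collects the slow, memory-driven residual; when these live on well-separated time scales, any sufficiently mixing driving process forces their cross-correlation toward zero. The genuine difficulty is that this decorrelation is system-specific: absent a concrete mixing rate or spectral-separation estimate, $\delta$ cannot be pinned down universally, which is precisely why the lemma asserts only the existence of a small, strictly positive $\epsilon$ rather than a closed form.

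Finally, combining the three ingredients yields the claim, $\bigl|\mathbb{E}[rh]\bigr| = |\rho_{rh}|\sqrt{\mathbb{E}[r^2]\,\mathbb{E}[h^2]} \leq \delta M =: \epsilon$, with $\epsilon > 0$ strictly positive since real residuals never decorrelate perfectly. I would close by noting that this is consistent with the complementary DOB lower bound $\mathbb{E}[rh] \geq -\tfrac{1}{2}\mathbb{E}[r^2]$ from \textbf{Theorem~\ref{thm:DOB_IP_space}}: the present lemma supplies the (system-specific) upper envelope to the universally guaranteed lower one, while the empirical validity across the diverse systems tabulated in \textbf{Table~\ref{realworld_table}} furnishes the supporting evidence for the weak-coupling regime.
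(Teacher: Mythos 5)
Your proposal is correct in the same sense that the paper's own proof is correct, but it takes a genuinely different route. The paper expands $\mathbb{E}[(G-X)h] = \mathbb{E}[Gh] - \mathbb{E}[Xh] \triangleq P - Q$, takes the quantitative content of ``weak dependence'' to be the separate bounds $|P| \leq \epsilon_1$ and $|Q| \leq \epsilon_2$, and concludes by the triangle inequality with $\epsilon = \epsilon_1 + \epsilon_2$. You instead apply Cauchy--Schwarz to $\mathbb{E}[rh]$ directly, control the normalization $\sqrt{\mathbb{E}[r^2]\,\mathbb{E}[h^2]}$ by a finite constant $M$ using Theorem~\ref{thm:bounded_power} and Lemma~\ref{lem:bounded_input_output_powers_realworldNLsystem}, and place the weak-dependence hypothesis entirely in the normalized correlation $|\rho_{rh}| \leq \delta$, yielding $\epsilon = \delta M$. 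Your version has the advantage of cleanly separating what is actually \emph{proven} (finiteness of the normalization, via the power-boundedness results) from what is \emph{assumed} (smallness of $\rho_{rh}$), and it makes the dependence of $\epsilon$ on the system's power levels explicit; the paper's version is more elementary and phrases the assumption in terms of the raw cross-moments $\mathbb{E}[Gh]$ and $\mathbb{E}[Xh]$, the same building blocks as the dynamic indicator $\alpha$. In both cases the lemma is essentially a formalized modeling assumption rather than a derived fact --- the conclusion is a quantitative restatement of the hypothesis --- and you correctly flag this as the irreducible obstacle; neither argument pins down $\epsilon$ (or $\delta$) without system-specific mixing or spectral-separation input, which the paper itself concedes in Appendix~\ref{appendix:TheAppendixB_WeakUncorrelatednessLemma} by leaving the upper bound $\epsilon_0$ as an open question.
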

\begin{proof}[Proof of Lemma~\ref{lemma1}]
We begin by expanding the correlation term:
\begin{align}
\mathbb{E}[(G - X)\, h] = \mathbb{E}[G\, h] - \mathbb{E}[X\, h].
\end{align}

Define:
\begin{align}
&P \triangleq \mathbb{E}[G\, h],\quad Q \triangleq \mathbb{E}[X\, h],\\
&\Rightarrow \mathbb{E}[(G - X)\, h] = P - Q.
\end{align}

Under the weak uncorrelatedness assumption, both cross-moments $ P $ and $ Q $ are small in magnitude:
\begin{align}\label{A_and_B_inequalities}
|P| \leq \epsilon_1, \quad |Q| \leq \epsilon_2,
\end{align}
for small constants $ \epsilon_1, \epsilon_2 > 0 $.

Using the triangle inequality:
\begin{align}
|P - Q| \leq |P| + |Q| \leq \epsilon_1 + \epsilon_2 \triangleq \epsilon.
\end{align}

Therefore:
\begin{equation}\label{Lemma_inequality}
    \begin{aligned}
        \Bigl|\mathbb{E}\bigl[(G-X) h\bigr]\Bigr| \leq \epsilon.
    \end{aligned}
\end{equation}
The constant $ \epsilon $ reflects the residual statistical dependence between the static residual $ G - X $ and the dynamic residual $ h $. In physical systems with time-scale separation (e.g., slow-varying memory dynamics), this value is typically small, confirming that the components act largely independently at second order.
\end{proof}
\subsection{Interpretation and Practical Relevance of the Weak Uncorrelatedness Lemma}

The Weak Uncorrelatedness Lemma formalizes a recurring structural feature of real-world dynamic nonlinear systems: the output can be decomposed into an instantaneous static nonlinear component $ G $ and a time-varying dynamic residual $ h $, with weak statistical interaction between them. Though technical in form, this assumption is supported by empirical observations across diverse physical systems.

Table~\ref{realworld_table} catalogs examples from engineered domains like \ac{PA}s and optical fibers to biological, economic, and astrophysical systems, where static and dynamic components often evolve on distinct timescales with limited statistical coupling. Typically, the static part reflects memoryless, input-driven behavior (e.g., gain curves, equilibrium responses), while the dynamic part captures delayed or adaptive effects (e.g., memory, feedback, transients). The weak uncorrelatedness condition $ |\mathbb{E}[(G - X) h]| \leq \epsilon $ thus defines a \emph{diagnostic uncertainty band}, quantifying tolerable interaction between static and dynamic effects without losing analytical tractability.

This condition is closely linked to the idea of \textit{time-scale separation}. While static transformations are not strictly instantaneous, they unfold much faster than the memory dynamics. This allows a structured decomposition: $ G $ captures fast, nearly immediate effects, while $ h $ accounts for slower, history-dependent variations. For instance, in \ac{PA}s, nonlinear distortions within nanoseconds, whereas memory effects such as thermal drift unfold over microseconds \cite{root_memory_effects_2004}. Similar fast-slow patterns appear in biochemical and neuronal systems \cite{timescale_enzyme_kinetics, timescale_neuronal_dynamics}.

The Weak Uncorrelatedness Lemma played a formative role shaping our theoretical framework. This lemma initially served as an important assumption and revealed the plausibility of the critical condition $ \theta + 2\alpha > 0 $, which ultimately guided the finding of the DOB inequality. While now superseded in terms of theoretical necessity, the lemma retains interpretive value and reflects a recurring empirical pattern: across a variety of systems, static and dynamic components often exhibit weak statistical coupling. This observation motivates an open question, whether such decoupling reflects a deeper principle that prevents full correlation, such as that required to saturate the Cauchy--Schwarz bound. With the DOB Theorem establishing a lower bound, and empirical evidence from Table~\ref{realworld_table} suggesting a tight but nonzero interaction, we are led to the constrained interval:
\begin{align}
-0.5\,\mathbb{E}[(G - X)^2] \,\leq\, \mathbb{E}[(G - X)\, h] \,<\, \epsilon_0 \,\ll\, \|G - X\|\, \|h\|,\nonumber
\end{align}
where $ \epsilon_0 $ represents a theoretical or empirical upper bound that remains to be identified. Clarifying the nature of this upper bound, whether system-specific or universal, remains an open challenge and an intriguing direction for future work.

 \section{Sufficient Condition for Mean-Squared Lipschitz Continuity}\label{appendix:TheAppendixC_SufficiencyCondition_MeanSquaredLipSchitz}
\begin{theorem}[Sufficient Condition for Mean-Squared Lipschitz Continuity]\label{thm:LipschitzContinuitySufficiency}
Let $ f: [a,b] \times \Omega \to \mathbb{R} $ be a stochastic process. Suppose that:

\begin{enumerate}
    \item \textbf{Finite mean-square power:}
    \begin{align}
    \mathbb{E}[|f(t)|^2] < \infty, \quad \forall t \in [a,b].\nonumber
    \end{align}
    
    \item \textbf{Finite memory and bounded dynamic behavior:}
    There exists a constant $ K > 0 $ such that for all $ t, s \in [a,b] $,
    \begin{align}
    \mathbb{E}\left[ |f(t) - f(s)|^2 \right] \leq K^2 |t - s|^2.\nonumber
    \end{align}
\end{enumerate}

Then $ f $ is mean-squared Lipschitz continuous on $[a,b]$, with mean-squared Lipschitz constant at most $ K $. That is,
\begin{align}
\sup_{s \neq t \in [a,b]} \frac{\mathbb{E}\left[ |f(t) - f(s)|^2 \right]}{|t-s|^2} \leq K^2.\nonumber
\end{align}
\end{theorem}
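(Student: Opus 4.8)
The plan is to observe that the conclusion is essentially a direct rearrangement of the second hypothesis, with the first hypothesis serving only to guarantee that all quantities involved are finite and the defining ratio is well-posed. First I would confirm that the mean-squared increment $\mathbb{E}[|f(t)-f(s)|^2]$ is finite for every pair $s,t \in [a,b]$. By Minkowski's inequality applied in $L^2(\Omega)$,
\begin{align}
\bigl(\mathbb{E}[|f(t)-f(s)|^2]\bigr)^{1/2} \le \bigl(\mathbb{E}[|f(t)|^2]\bigr)^{1/2} + \bigl(\mathbb{E}[|f(s)|^2]\bigr)^{1/2},
\end{align}
and the finite mean-square power hypothesis makes the right-hand side finite. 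This ensures the numerator of the defining ratio is a genuine nonnegative real number rather than $+\infty$, so the ratio is well-posed.

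Next I would fix an arbitrary pair $s \neq t$ in $[a,b]$. Because $s \neq t$, the denominator $|t-s|^2$ is strictly positive, so dividing both sides of the bounded-dynamic-behavior hypothesis $\mathbb{E}[|f(t)-f(s)|^2] \le K^2 |t-s|^2$ by $|t-s|^2$ preserves the inequality and yields
\begin{align}
\frac{\mathbb{E}[|f(t)-f(s)|^2]}{|t-s|^2} \le K^2.
\end{align}
Since the pair was arbitrary, this bound is uniform over all admissible $(s,t)$. Taking the supremum then shows that the least upper bound of the set $\{\,\mathbb{E}[|f(t)-f(s)|^2]/|t-s|^2 : s \neq t\,\}$ cannot exceed the single constant $K^2$, giving
\begin{align}
\sup_{s \neq t \in [a,b]} \frac{\mathbb{E}[|f(t)-f(s)|^2]}{|t-s|^2} \le K^2,
\end{align}
which is exactly mean-squared Lipschitz continuity with constant at most $K$, matching the form of $L_f$ used in \textbf{Theorem~\ref{Thm:Lipschitz_dominance}}.

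I do not anticipate a substantive obstacle here; the result is a definitional unwinding rather than a deep theorem, and the main value lies in pinning down precisely why each hypothesis is needed. The only point requiring minor care is the well-definedness of the increment second moments, which is why the finite-power hypothesis is retained even though it does not appear in the final inequality. A subtlety worth flagging in the write-up is that the theorem asserts only an \emph{upper} bound on the Lipschitz constant and makes no claim that the supremum is attained or that a matching lower bound holds; consequently no reverse inequality is required, and the proof is complete once the uniform bound is passed to the supremum.
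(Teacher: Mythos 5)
Your proposal is correct and follows essentially the same route as the paper: divide the hypothesized bound $\mathbb{E}[|f(t)-f(s)|^2] \leq K^2|t-s|^2$ by the strictly positive $|t-s|^2$ and pass to the supremum. The only addition is your Minkowski-based check that the increment second moments are finite, which the paper omits but which does not change the argument.
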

\begin{proof}[Proof of Theorem~\ref{thm:LipschitzContinuitySufficiency}]
Since by assumption, for every $ s, t \in [a,b] $,
\begin{align}
\mathbb{E}\left[ |f(t) - f(s)|^2 \right] \leq K^2 |t-s|^2,\nonumber
\end{align}
it follows immediately that
\begin{align}
\frac{\mathbb{E}\left[ |f(t) - f(s)|^2 \right]}{|t-s|^2} \leq K^2,\nonumber
\end{align}
for all $ s \neq t $.

Taking the supremum over all $ s \neq t \in [a,b] $, we obtain:
\begin{align}
\sup_{s \neq t \in [a,b]} \frac{\mathbb{E}\left[ |f(t) - f(s)|^2 \right]}{|t-s|^2} \leq K^2.\nonumber
\end{align}

Thus, $ f $ is mean-squared Lipschitz continuous with Lipschitz constant at most $ K $.
\end{proof}

\bibliographystyle{IEEEtran}
\bibliography{main_arXiv}
\section*{Biography}
 \begin{IEEEbiography}
 	[{\includegraphics[width=1in,height=1.25in, clip,keepaspectratio]	{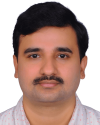}}]{S.~S.~Krishna Chaitanya BULUSU} (M'15--SM'21) received the B.Tech. degree in electronics and communications engineering from Pondicherry Engineering College (PEC), Pondicherry, India, in 2005. He was a recipient of the Thales-French Ministry of External Affairs (MAE) scholarship in 2007. He received the M.S. degree in mobile communications from T{\'e}l{\'e}com ParisTech/Institute Eurecom, Sophia Antipolis, France, in 2009. He received the Ph.D. degree in radiocommunications from Conservatoire National des Arts et Métiers (CNAM), Paris, France, in 2016. From 2016 to 2017, he held a post-doctoral position with the National Institute of Applied Sciences, Rennes, France. From 2018 to 2021, he was an Assistant Professor at École Centrale School of Engineering, Hyderabad, India. Since 2022, he has been a post-doctoral researcher at the Center for Wireless Communications (CWC), University of Oulu, Oulu, Finland. His research interests include digital signal processing and wireless communications for 5G and beyond with a special focus on millimeter waves and RF impairment mitigation. He is a senior member of IEEE and also a fellow of the Institution of Engineers (India). He is the author of technical papers in 18 international conferences and 7 international journal articles and holds an international patent. He was involved in a few European, French, and Finnish research projects involving 6G, 5G, broadcasting systems, and PMR systems.
 \end{IEEEbiography}
 \vskip -2\baselineskip plus -1fil
 \begin{IEEEbiography}
 	[{\includegraphics[width=1in,height=1.25in, clip,keepaspectratio]	{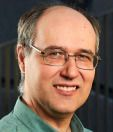}}]{Mikko J. SILLANPÄÄ} received his M.Sc. in Applied Mathematics from University of Jyväskylä 1992 and Ph.D.in Biometry from the University of Helsinki 2000, Finland. He worked as an Academy Research Fellow from 2003-2008 and University Lecturer at the University of Helsinki from 2009-2011, Finland. Since 2011 he has been a full-time Professor in Statistics and is currently also the Department Chair at the Research Unit of Mathematical Sciences, University of Oulu, Finland. He has a long and acknowledged experience in Bayesian statistics and computational methods in high-dimensional problems in biology, medicine, and other fields. Sillanpää has 127 journal publications, 22 conference publications, and book chapters; his works have been cited 4847 times; with h-index 28 (Web of Science). He is an Associate Editor in Theoretical and Applied Genetics, Genetics as well as Editor-in-Chief of the Scandinavian Journal of Statistics. He is a Director of the University of Oulu HiDyn program 2019-2025 funded by the Academy of Finland PROFI5: multidisciplinary data science and artificial intelligence hub strengthening all of the University of Oulu’s profiling areas and the 6G Flagship.
 \end{IEEEbiography}
\end{document}